%% file: main.tex
\documentclass[10pt,twocolumn]{article}

\usepackage[utf8]{inputenc}
\usepackage[margin=0.78in]{geometry}
\usepackage{amsmath,amssymb,amsfonts,amsthm}
\usepackage{booktabs}
\usepackage{graphicx}
\usepackage{hyperref}
\usepackage{url}
\usepackage{xcolor}
\usepackage{enumitem}
\usepackage{caption}
\usepackage{subcaption}

\hypersetup{
    colorlinks=true,
    linkcolor=blue!70!black,
    citecolor=blue!70!black,
    urlcolor=blue!70!black
}

\setlist[itemize]{label=$\bullet$}

\newtheorem{proposition}{Proposition}

\title{\textbf{MLFont: Machine Learning-Driven Font Subsetting and Range-Compact Unicode-Range Slicing for Low-Latency Web Typography}}

\author{
  \textbf{Quan Wang} \qquad \textbf{Yiran Mao}
}

\date{}

\begin{document}

\maketitle

\begin{abstract}
CSS3 \texttt{@font-face} rules with \texttt{unicode-range} descriptors partition web fonts into downloadable slices. While Latin fonts need $\sim$20\,KB in WOFF2, Pan-European (LGC) and East Asian (CJK) fonts contain $2{,}297$--$30{,}759$ codepoints ($158$\,KB--$5.9$\,MB in WOFF2), where heuristic partitioning causes unused-glyph downloads, high HTTP RTT, CSS bloat, or missing-glyph fallbacks. We present \textbf{MLFont}, a machine learning and optimization framework that learns font slicing strategies from corpora via three stages: (1) a \textbf{Zipfian High-Frequency Tier} (\texttt{most}) isolating top-frequency codepoints; (2) a \textbf{Latent Co-Occurrence Topic Tier} using Online Variational Bayes LDA (\textbf{Online LDA}) or a Deep Neural Network Topic Model (\textbf{DNNTM}) with Shannon Entropy Difference Loss and Sinkhorn-Knopp normalization; and (3) a \textbf{Memetic Range-Compact Genetic Algorithm (GA)} with Potts-Viterbi dynamic programming and island bridging that minimizes topic discrepancy and CSS fragmentation. Across five Google Noto fonts (\texttt{LGC}, \texttt{KR}, \texttt{JP}, \texttt{TC}, \texttt{SC}), MLFont cuts 3G per-glyph latency by \textbf{56.7\%--75.3\%} over single-file delivery and \textbf{46.1\%--66.3\%} over equal-bin baselines on CJK with \textbf{100\% coverage}, reduces CSS intervals by \textbf{17.9\%--41.7\%}, and achieves \textbf{1.56$\times$--6.09$\times$ cold-cache speedups} ($65.4\%$--$84.9\%$ CJK and $44.7\%$ LGC bandwidth savings) across $20$ websites in simulator and Headless Chrome benchmarks ($45/45$ exact slice-set parity).

\vspace{2pt}\noindent
\textbf{Code:} \url{https://github.com/wq2012/mlfont}\\
\textbf{Models:} \url{https://huggingface.co/collections/wq2012/mlfont}
\end{abstract}

\section{Introduction}
\label{sec:intro}

Web fonts are a foundational pillar of modern digital typography, enabling consistent visual identity and multilingual rendering across heterogeneous client operating systems \cite{daggett2018css,levantovsky2018woff2}. However, serving heavy font libraries over the web presents a severe systems and networking bottleneck known as the \emph{Heavy Web Font Loading Problem}. Whereas a standard Western Latin typeface contains $200$--$400$ glyphs and compresses to $15$--$30$\,KB under WOFF2 Brotli compression \cite{alakuijala2015brotli,levantovsky2018woff2}, Pan-Unicode typefaces such as \texttt{Noto Sans} (Latin-Greek-Cyrillic, $2{,}297$ codepoints) and East Asian CJK typefaces such as \texttt{Noto Sans CJK} ($16{,}663$ to $30{,}759$ codepoints across Japanese, Traditional Chinese, Korean, and Simplified Chinese) span $3.0$\,MB to $5.9$\,MB per font weight even after WOFF2 compression \cite{lunde2009cjk}. Downloading a multi-megabyte font file over cellular or emerging-market networks (e.g., 2G/3G) causes multi-second render-blocking delays, manifesting as Flash of Invisible Text (FOIT) or jarring layout shifts from Flash of Unstyled Text (FOUT).

To mitigate full-font downloads, the W3C CSS Fonts Module Level 3 specification introduced the \texttt{unicode-range} descriptor within \texttt{@font-face} rules \cite{daggett2018css}. This mechanism allows a single logical typeface to be partitioned into $K$ disjoint physical WOFF2 subset files (slices) $S_1, \dots, S_K$. When a browser parses a web page, it inspects the active DOM characters and lazily fetches only those slices whose \texttt{unicode-range} intervals intersect the page's character set. Unlike patch-based protocols such as the W3C Incremental Font Transfer (IFT) specification \cite{garret2023ift}---which require precomputed patch sets and a client-side patch-application engine---static \texttt{unicode-range} slicing operates purely over standard immutable CDN files and is supported by all major modern web browsers; MLFont's learned co-occurrence partitions can also inform IFT patch-table groupings.

\begin{figure*}[t]
    \centering
    \includegraphics[width=0.98\textwidth]{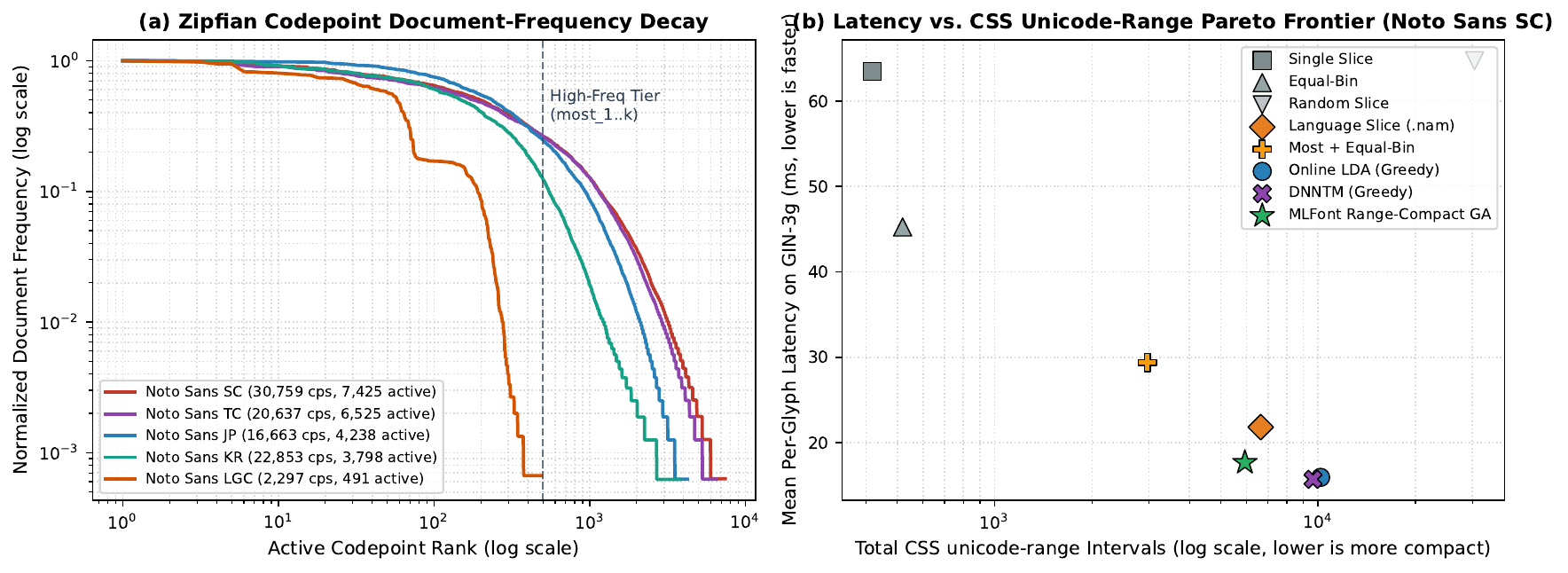}
    \caption{\textbf{(a)} Empirical Zipfian decay of codepoint document frequencies across five Google Noto fonts on Wikipedia corpora, motivating Tier-1 high-frequency isolation (\texttt{most\_1..M}). \textbf{(b)} Pareto trade-off between mean per-glyph download latency (\texttt{GIN-3g}, session length $L=10$) and total CSS \texttt{unicode-range} intervals on \texttt{Noto Sans SC} ($30{,}759$ codepoints).}
    \label{fig:pareto_zipf}
\end{figure*}

Despite widespread browser support for \texttt{unicode-range}, designing an optimal codepoint partition $\mathcal{S} = \{S_1, \dots, S_K\}$ for a font of $V$ codepoints is a challenging combinatorial optimization problem governed by three competing physical forces:
\begin{enumerate}[leftmargin=*,nosep,label=(\arabic*)]
    \item \textbf{Bandwidth Waste (Over-fetching):} If a slice $S_k$ groups together characters that rarely co-occur on the same web page, downloading $S_k$ for a single needed character forces the client to download hundreds of unused glyphs, resulting in poor glyph utilization ($<6\%$).
    \item \textbf{HTTP Request Overhead (RTT Multiplication):} If characters that frequently co-occur on a single web page are scattered across many slices, rendering a single page triggers dozens of distinct HTTP requests, multiplying network Round-Trip Time (RTT) overhead.
    \item \textbf{CSS Stylesheet Fragmentation:} In a CSS \texttt{@font-face} rule, each slice's codepoint set must be serialized as a comma-separated list of hexadecimal intervals (e.g., \texttt{U+4E00-4E05, U+4E08}). If a slicing strategy assigns numerically adjacent Unicode codepoints to different slices, the resulting CSS stylesheet explodes to tens of thousands of disjoint hex intervals, inflating the render-blocking CSS payload itself.
\end{enumerate}

Traditional heuristics fail to reconcile these three objectives. \emph{Uniform codepoint binning} (\texttt{equal\_bin}) preserves contiguous Unicode ranges but ignores semantic character co-occurrence, causing nearly every slice to be fetched on every CJK page. \emph{Static language/script subsetting} (\texttt{language\_slice}) works for coarse script separation (e.g., Latin vs.\ Cyrillic) but breaks down on CJK Unified Ideographs---where $20{,}000+$ Han characters share a single Unicode block---and leaves thousands of font codepoints unassigned (causing missing-glyph tofu whenever pages contain cross-script citations, rare Hanja/Kanji, or symbols).

In this work, we present \textbf{MLFont}, an open-source reproduction and algorithmic extension of Google's machine learning font slicing technology \cite{wang2017learning,googlefonts2017post} that learns font slicing strategies from large-scale web document corpora. Our key contributions are:
\begin{itemize}[leftmargin=*,nosep]
    \item \textbf{Two-Tier Frequency and Topic Co-Occurrence Architecture:} We exploit the steep Zipfian power-law decay of character document frequencies (Figure~\ref{fig:pareto_zipf}a) by isolating the top most frequent codepoints into deterministic, universally cached Tier-1 slices (\texttt{most}), and modeling latent co-occurrence across the remaining $V_{\text{rem}}$ tail codepoints via Online Variational Bayes LDA (\textbf{Online LDA}) or a Deep Neural Network Topic Model (\textbf{DNNTM}) that unifies discriminative non-negative Shannon Entropy Difference Losses \cite{shannon1948mathematical} with nonnegative sparse autoencoder reconstruction \cite{miao2016neural,srivastava2017autoencoding}, paired with two-tier Hungarian contiguous-bin Bayes alignment.
    \item \textbf{Bi-Stochastic Affinity Normalization and Memetic Range-Compact GA:} We analyze five topic-codepoint normalization schemes (\texttt{topic\_sum}, \texttt{row\_sum}, \texttt{global\_max}, \texttt{spectral\_norm}, and Sinkhorn-Knopp bi-stochastic scaling \texttt{sinkhorn} \cite{sinkhorn1967concerning}) and introduce a \textbf{Memetic Range-Compact Genetic Algorithm (GA)} with a power-law slice-capacity curriculum $\text{max\_diff}(t)$, Lagrangian 1D Potts-Viterbi dynamic programming relaxation \cite{potts1952some,viterbi1967error}, sparse-tail monochromatic island bridging, and tier-capacity invariance that compacts fragmented Unicode assignments into contiguous \texttt{unicode-range} runs ($-17.9\%$ to $-41.7\%$ intervals) while strictly enforcing balanced slice capacities.
    \item \textbf{Multi-Page Session Network Simulator, Real-Website Browser Benchmarking, and Open-Source Release:} We formulate a realistic multi-page browsing session evaluator across seven network/session profiles, evaluate both canonical fixed-byte and physical WOFF2 slice sizes across five Google Noto fonts, validate end-to-end page font loading across $20$ multi-domain websites in both our analytical simulator and real Headless Chrome ($1.56\times$ to $6.09\times$ cold-cache speedups and $45/45$ [$100.0\%$] exact slice-set agreement with Chrome), and release our open-source library (\url{https://github.com/wq2012/mlfont}) and pretrained Noto model bundles (\url{https://huggingface.co/collections/wq2012/mlfont}).
\end{itemize}

\section{Problem Formulation and Session Latency Model}
\label{sec:formulation}

Let a font file contain a sorted vocabulary of $V$ valid Unicode codepoints $\mathcal{C} = \{c_1 < c_2 < \dots < c_V\} \subset \mathbb{N}$ extracted from its OpenType \texttt{cmap} table. A \emph{font slicing strategy} of size $K$ is a family of $K$ non-empty, pairwise disjoint subsets $\mathcal{S} = \{S_1, S_2, \dots, S_K\}$ of $\mathcal{C}$:
\begin{equation}
    \mathcal{C}_{\text{cov}}(\mathcal{S}) = \bigcup_{k=1}^{K} S_k \subseteq \mathcal{C}, \qquad S_k \cap S_{k'} = \emptyset \quad \forall k \neq k'.
\end{equation}
A strategy has \emph{full font coverage} iff $\mathcal{C}_{\text{cov}}(\mathcal{S}) = \mathcal{C}$ (i.e., $V - |\mathcal{C}_{\text{cov}}(\mathcal{S})| = 0$ unassigned font codepoints). Under our standard network simulator (\texttt{STANDARD\_EVALUATION\_PARAMS}), each glyph is modeled with a canonical compressed size of $b_{\text{glyph}} = 200$\,bytes ($B(S_k) = 200\,|S_k|$), while our physical WOFF2 evaluation mode measures exact Brotli-compressed `.woff2` slice byte sizes $B_{\text{WOFF2}}(S_k)$ (Appendix~\ref{app:benchmarks}).

\subsection{CSS Unicode-Range Interval Complexity}
For any subset $S_k \subset \mathbb{N}$, let $\mathcal{I}(S_k)$ denote its minimal decomposition into maximal contiguous integer intervals $[a_1, b_1], \dots, [a_{R_k}, b_{R_k}]$ such that $b_r + 1 < a_{r+1}$. The number of CSS \texttt{unicode-range} intervals for slice $S_k$ is $R(S_k) = |\mathcal{I}(S_k)|$:
\begin{equation}
    R(S_k) = 1 + \sum_{i=1}^{|S_k|-1} \mathbb{I}\!\left[s_{k,i+1} > s_{k,i} + 1\right],
\end{equation}
where $s_{k,1} < \dots < s_{k,|S_k|}$ are the sorted elements of $S_k$. The total CSS range complexity of strategy $\mathcal{S}$ is $R(\mathcal{S}) = \sum_{k=1}^{K} R(S_k)$.

\subsection{Multi-Page Browsing Session Latency Model}
Web users navigate across multiple pages within a browsing session while the browser's HTTP cache retains previously downloaded font slices. We evaluate over a corpus of test documents $\mathcal{D}_{\text{eval}} = \{P_1, \dots, P_{N_{\text{eval}}}\}$, where each page $P_m \subseteq \mathbb{N}$ is represented by the set of unique Unicode codepoints appearing on that page, and $\tilde{P}_m = P_m \cap \mathcal{C}_{\text{cov}}(\mathcal{S})$ denotes the page codepoints covered by strategy $\mathcal{S}$ (with $|\bigcup_m (P_m \setminus \tilde{P}_m)|$ recorded as evaluation missing codepoints).

Pages are partitioned into sequential browsing sessions of length $L \ge 1$ (default $L = 10$). Within a session $\sigma = (P_{m_1}, \dots, P_{m_L})$, let $\mathcal{F}_{t-1} \subseteq \{1, \dots, K\}$ denote the set of slice indices already cached by the browser prior to loading page $P_{m_t}$ (with $\mathcal{F}_0 = \emptyset$). Upon visiting page $P_{m_t}$, the browser fetches all uncached slices intersecting $P_{m_t}$:
\begin{equation}
    \mathcal{U}_t = \left\{ k \in \{1, \dots, K\} \setminus \mathcal{F}_{t-1} \;\middle|\; S_k \cap P_{m_t} \neq \emptyset \right\},
\end{equation}
and updates its cache $\mathcal{F}_t = \mathcal{F}_{t-1} \cup \mathcal{U}_t$. Given a network profile parameterized by Round-Trip Time $\tau_{\text{RTT}}$ (ms) and effective bandwidth $\beta_{\text{BW}}$ (bytes/ms), the RTT latency, download latency, and total per-glyph latency on page $P_{m_t}$ (normalized by covered page glyphs $g_t = \max(1, |\tilde{P}_{m_t}|)$) are:
\begin{align}
    T_{\text{RTT}}(P_{m_t}) &= \frac{|\mathcal{U}_t| \cdot \tau_{\text{RTT}}}{g_t}, \label{eq:rtt_lat} \\
    T_{\text{dl}}(P_{m_t}) &= \frac{\sum_{k \in \mathcal{U}_t} B(S_k)}{\beta_{\text{BW}} \cdot g_t}, \label{eq:dl_lat} \\
    T_{\text{total}}(P_{m_t}) &= T_{\text{RTT}}(P_{m_t}) + T_{\text{dl}}(P_{m_t}). \label{eq:tot_lat}
\end{align}
Eq.~\eqref{eq:rtt_lat} charges one RTT per newly fetched slice (serial requests). We further define \textbf{Upfront Glyph Utilization} ($U_{\text{upfront}}$, measured on the first, cold-cache page $P_{m_1}$ of each session) and \textbf{Session Glyph Utilization} ($U_{\text{session}}$, across all $L$ pages in session $\sigma$) as:
\begin{align}
    U_{\text{upfront}}(\sigma) &= \frac{|\tilde{P}_{m_1}|}{\sum_{k \in \mathcal{U}_1} |S_k|}, \\
    U_{\text{session}}(\sigma) &= \frac{\big|\bigcup_{t=1}^{L} \tilde{P}_{m_t}\big|}{\sum_{k \in \mathcal{F}_L} |S_k|}.
\end{align}

\section{The MLFont Framework}
\label{sec:method}

To minimize expected per-glyph latency $\mathbb{E}[T_{\text{total}}(P)]$ while bounding CSS interval complexity $R(\mathcal{S})$, MLFont decomposes codepoint partitioning into three stages.

\subsection{Tier 1: Zipfian Document-Frequency Isolation}
Given a training corpus of $D$ documents represented by binary indicator matrix $\mathbf{X} \in \{0, 1\}^{D \times V}$ (where $X_{d,v} = 1$ iff codepoint $c_v$ occurs in document $d$), the document frequency of codepoint $c_v$ is $\text{DF}(c_v) = \sum_{d=1}^{D} X_{d,v}$. As shown in Figure~\ref{fig:pareto_zipf}(a), $\text{DF}(c_v)$ exhibits a steep Zipfian decay \cite{zipf1949human}: in \texttt{Noto Sans SC}, the top $1{,}000$ codepoints ($3.3\%$ of the font) account for $76.3\%$ of total document-frequency mass across the $7{,}425$ active characters (and the Tier-1 slices cover $87.9\%$ of visible tokens on our SC benchmark websites, Section~\ref{sec:website_sim}).

If these ubiquitous codepoints are mixed into latent topic slices, every topic slice containing even a single high-frequency character will be triggered on almost every page. MLFont therefore isolates the top high-frequency codepoints into $M$ deterministic prefix slices specified by tuple $\mathbf{m} = (m_1, \dots, m_M)$ (e.g., $\mathbf{m} = (1000, 500)$ for Simplified Chinese). Sorting $\mathcal{C}$ such that $\text{DF}(c_{\pi(1)}) \ge \dots \ge \text{DF}(c_{\pi(V)})$ (breaking ties by ascending codepoint value), Tier 1 forms ($j = 1, \dots, M$):
\begin{equation}
    S_j^{\text{most}} = \left\{ c_{\pi(i)} \;\middle|\; \textstyle\sum_{r=1}^{j-1} m_r < i \le \sum_{r=1}^{j} m_r \right\}.
\end{equation}
The remaining $V_{\text{rem}} = V - \sum_{j=1}^{M} m_j$ tail codepoints $\mathcal{C}_{\text{rem}}$ and restricted corpus matrix $\mathbf{X}_{\text{rem}} \in \{0, 1\}^{D \times V_{\text{rem}}}$ are passed to Tier 2.

\subsection{Tier 2: Latent Character Co-Occurrence Topic Models}
We model the co-occurrence structure of tail codepoints $\mathcal{C}_{\text{rem}}$ across documents by learning a positive topic-codepoint affinity matrix $\mathbf{\Lambda} \in \mathbb{R}_+^{K \times V_{\text{rem}}}$, where $K$ is the number of topic slices. Because large CJK fonts contain thousands of archival or rare variants that never appear in a training corpus ($\text{DF}(c_v) = 0$) while capacity-balanced slicing requires every slice to hold $C_{\text{slice}} = \lceil V_{\text{rem}}/K \rceil$ codepoints, spreading active characters uniformly across all $K$ slices when $V_{\text{active}} \ll V_{\text{rem}}$ would force every active slice to be padded with $\sim 80\%$ unobserved ($\text{DF}=0$) characters and severely degrade bandwidth utilization. Both MLFont topic learners therefore fit their latent parameters over the active corpus vocabulary $\mathcal{C}_{\text{active}} = \{c_v \in \mathcal{C}_{\text{rem}} \mid \text{DF}(c_v) > 0\}$ ($V_{\text{active}} = |\mathcal{C}_{\text{active}}|$) and project the learned $K$-topic posteriors onto the full $K \times V_{\text{rem}}$ matrix via \emph{two-tier contiguous-bin Bayes alignment} (\texttt{align\_and\_smooth\_topic\_matrix}):
\begin{enumerate}[leftmargin=*,nosep,label=(\arabic*)]
    \item Partition $\mathcal{C}_{\text{rem}}$ into $K$ contiguous equal-sized Unicode bins $Q_0, \dots, Q_{K-1}$ of capacity $C_{\text{slice}} = \lfloor (V_{\text{rem}} - 1)/K \rfloor + 1$, and initialize $\mathbf{\Lambda}$ with a smooth distance-decaying Dirichlet prior $\Lambda^{(0)}_{k,v} = 0.0020 + 0.0025 \exp(-|k - b(v)|/2) > 2 \times 10^{-3}$, where $b(v)$ is the home bin of codepoint $v$.
    \item Compute Bayes codepoint-to-topic posteriors $P(k \mid v) = \tilde{\Lambda}_{k,v} / \sum_{k'=1}^{K} \tilde{\Lambda}_{k',v}$ over $\mathcal{C}_{\text{active}}$, and define the core active set $\mathcal{C}_{\text{core}} = \{v \in \mathcal{C}_{\text{active}} \mid \text{DF}(c_v) \ge d_{\min}\}$ ($d_{\min}=2$ when $V_{\text{rem}} > 5{,}000$ and $\max_v \text{DF}(c_v) \ge 10$, and $1$ otherwise) and effective core topic capacity $K_{\text{eff}} = \max(\min(2, K), \min(K, \lceil 1.15\,|\mathcal{C}_{\text{core}}| / C_{\text{slice}} \rceil))$.
    \item \textbf{Two-Tier Bijective Hungarian Assignment:} Rank topics by DF-weighted posterior mass $\sum_v P(k \mid v)(1 + \ln(1 + \text{DF}(c_v)))$, select the $K_{\text{eff}}$ dominant topic rows $\mathcal{T}_{\text{core}}$, and align them bijectively to the $K_{\text{eff}}$ most core-populated Unicode bins via the Hungarian algorithm (\texttt{linear\_sum\_assignment}). For each aligned pair $(k, b^*(k))$, $\Lambda_{b^*(k), v}$ is increased on $v \in \mathcal{C}_{\text{active}}$ by $a_v + \gamma_v\,P_{\text{core}}(k \mid v) + 0.008\,\mathbb{I}[b(v) = b^*(k), v \in \mathcal{C}_{\text{core}}]$, where $P_{\text{core}}$ renormalizes $\tilde{\Lambda}$ over $\mathcal{T}_{\text{core}}$ and the floor $a_v \in [0.006, 0.061]$ and confidence scale $\gamma_v = 0.012 + 0.16\,r_v + 0.72\sqrt{\text{DF}(c_v)/\text{DF}_{\max}}$ grow with the DF ramp $r_v = \text{clip}((\text{DF}(c_v) - d_{\min})/3, 0, 1)$ on core codepoints (and are $\le 5 \times 10^{-4}$ otherwise), so core active characters concentrate into $K_{\text{eff}}$ dense topic slices. The remaining $K - K_{\text{eff}}$ tail topics are bijectively aligned by a second Hungarian assignment to the remaining sparse bins, adding $(0.0018 + 0.0035\,\mathbb{I}[b(v) = b^*(k)])\,\tilde{\Lambda}_{k,v} / \max_{v'} \tilde{\Lambda}_{k,v'}$, so all $K$ rows of $\mathbf{\Lambda}$ carry distinct learned posteriors while unobserved ($\text{DF}=0$) codepoints remain compactly grouped in contiguous Unicode blocks.
\end{enumerate}

\subsubsection{Online Variational Bayes LDA (\texttt{OnlineLdaLearner})}
Treating each document row $\mathbf{x}_d \in \{0, 1\}^{V_{\text{active}}}$ as a bag-of-codepoints, we fit a full $K$-topic Latent Dirichlet Allocation model \cite{blei2003latent} using online stochastic variational inference \cite{hoffman2010online} (scikit-learn \texttt{LatentDirichletAllocation} \cite{pedregosa2011scikit}): $20$ online updates on mini-batches of $1{,}000$ documents (sampled with replacement) with step size $\rho_t = (\tau_0 + t)^{-\kappa}$ ($\tau_0 = 10$, $\kappa = 0.7$) yield the variational Dirichlet parameters $\tilde{\mathbf{\Lambda}} \in \mathbb{R}_+^{K \times V_{\text{active}}}$, followed by two-tier contiguous-bin Bayes alignment to form $\mathbf{\Lambda} \in \mathbb{R}_+^{K \times V_{\text{rem}}}$.

\subsubsection{Deep Neural Network Topic Model (\texttt{DnnTmLearner})}
While generative autoencoder topic models \cite{miao2016neural,srivastava2017autoencoding} reconstruct bag-of-words inputs under sparsity penalties, pure reconstruction losses often expend model capacity on document-length variations rather than sharp codepoint clustering. To systematically bridge generative and discriminative neural topic modeling \cite{wang2017learning}, \textbf{DNNTM} trains a two-layer feed-forward encoder $f_\theta: \mathbb{R}^{V_{\text{active}}} \to \Delta^{K-1}$ (hidden width $128$ for LGC and $256$ for CJK, ELU hidden activations \cite{clevert2016fast}, Softmax output) alongside an optional projected nonnegative linear decoder $\mathbf{W}_{\text{dec}} \in \mathbb{R}_+^{K \times V_{\text{active}}}$ (\texttt{project\_nonnegative}, row-normalized as $\hat{\mathbf{W}}_{\text{dec}}$). At each step, $f_\theta$ maps a real document mini-batch $\mathbf{X}_{\text{real}} \in \{0, 1\}^{n_b \times V_{\text{active}}}$ ($n_b = 256$), synthetic Bernoulli noise documents $\mathbf{X}_{\text{fake}} \sim \text{Bernoulli}(\rho_{\text{fake}})$ ($\rho_{\text{fake}} = 0.05$), and one-hot single-codepoint identity probes $\mathbf{I}_{V_{\text{active}}} \in \{0, 1\}^{V_{\text{active}} \times V_{\text{active}}}$ (a random subset of $256$ probe rows per step when $V_{\text{active}} > 256$) to topic distributions $\mathbf{T}_{\text{real}} = f_\theta(\mathbf{X}_{\text{real}})$, $\mathbf{T}_{\text{fake}} = f_\theta(\mathbf{X}_{\text{fake}})$, and $\mathbf{T}_{\text{word}} = f_\theta(\mathbf{I}_{V_{\text{active}}})$. Let $H(\mathbf{p}) = -\sum_{k=1}^{K} p_k \ln p_k \in [0, \ln K]$ denote Shannon entropy, $\bar{H}(\mathbf{T}) = \frac{1}{N}\sum_{i=1}^{N} H(\mathbf{T}_{i,:})$ denote mean row entropy, and $\bar{\mathbf{t}} = \frac{1}{N}\sum_{i=1}^{N} \mathbf{T}_{i,:}$ denote the batch-average topic marginal. Shifting each entropy difference by $+\ln K$ to guarantee non-negativity, the discriminative \textbf{Shannon Entropy Difference Loss} (\texttt{ALL\_COMBINED}) minimizes \cite{shannon1948mathematical}:
\begin{equation}
\begin{aligned}
    \mathcal{L}_{\text{DNNTM}} &= \underbrace{\left(\bar{H}(\mathbf{T}_{\text{real}}) - \bar{H}(\mathbf{T}_{\text{fake}}) + \ln K\right)}_{\text{Real vs.\ Fake Document Sharpness}} \\
    &\quad + \underbrace{\left(\bar{H}(\mathbf{T}_{\text{real}}) - H(\bar{\mathbf{t}}_{\text{real}}) + \ln K\right)}_{\text{Document Topic Diversity (Mutual Info)}} \\
    &\quad + \underbrace{\left(\bar{H}(\mathbf{T}_{\text{word}}) - H(\bar{\mathbf{t}}_{\text{word}}) + \ln K\right)}_{\text{Probe Codepoint Topic Diversity}} \ge 0.
\end{aligned}
\label{eq:dnntm}
\end{equation}
We also formulate the projected nonnegative sparse reconstruction objective $\mathcal{L}_{\text{AE}} = D_{\text{KL}}(\hat{\mathbf{X}}_{\text{real}} \,\|\, \mathbf{T}_{\text{real}}\hat{\mathbf{W}}_{\text{dec}}) + \lambda_1(\bar{H}(\mathbf{T}_{\text{real}}) + \ln K - H(\bar{\mathbf{t}}_{\text{real}})) + \lambda_2 \,\overline{\mathbf{W}_{\text{dec}}^{\circ 2}}$ (\texttt{AUTOENCODER}, where $\hat{\mathbf{X}}_{\text{real}}$ is the row-normalized document distribution, $\overline{\mathbf{W}^{\circ 2}}$ is the mean squared decoder weight, $\lambda_1 = 0.05$, and $\lambda_2 = 10^{-4}$) and the unified objective $\mathcal{L}_{\text{hybrid}} = \mathcal{L}_{\text{DNNTM}} + \lambda_{\text{AE}}\mathcal{L}_{\text{AE}}$ (\texttt{HYBRID\_AUTOENCODER}, $\lambda_{\text{AE}}=0.5$). Minimizing Eq.~\eqref{eq:dnntm} forces real documents and individual codepoint probes to produce peaked (low-entropy) topic assignments while maximizing marginal entropy $H(\bar{\mathbf{t}})$ to prevent topic collapse, and pushes uniform fake documents toward high-entropy uniform mixtures. We optimize $f_\theta$ (and $\mathbf{W}_{\text{dec}}$) using Adam \cite{kingma2015adam} in PyTorch \cite{paszke2019pytorch} with initial learning rate $\eta_0 = 0.01$, decayed by factor $\gamma = 0.95$ every $100$ steps (\texttt{StepLR}) over $300$ steps on LGC and $400$ steps on CJK ($500$ default in \texttt{DnnTmConfig}). After training, the probe posteriors $\tilde{\Lambda}_{k,v} = T_{\text{word}, v, k} = P(k \mid v)$ (blended with the column-normalized decoder $\hat{\mathbf{W}}_{\text{dec}}$ at encoder weight $0.75$ for \texttt{HYBRID\_AUTOENCODER} and $0.35$ for \texttt{AUTOENCODER}) are passed through two-tier contiguous-bin alignment to produce $\mathbf{\Lambda} \in \mathbb{R}_+^{K \times V_{\text{rem}}}$.

\subsubsection{Affinity Matrix Normalization and Capacity Balancing}
Given $\mathbf{\Lambda} \in \mathbb{R}_+^{K \times V_{\text{rem}}}$, \texttt{mlfont.maths.normalize\_lambda\_matrix} supports five normalization operators $\mathbf{\Phi} = \mathcal{N}(\mathbf{\Lambda})$:
\begin{enumerate}[leftmargin=*,nosep,label=(\roman*)]
    \item \texttt{topic\_sum} (Column $L_1$): $\Phi_{k,v} = \Lambda_{k,v} / \sum_{k'=1}^{K} \Lambda_{k',v}$, normalizing each codepoint column into a probability distribution over topics.
    \item \texttt{row\_sum} (Row $L_1$): $\Phi_{k,v} = \Lambda_{k,v} / \sum_{v'=1}^{V_{\text{rem}}} \Lambda_{k,v'}$, normalizing each topic row into a distribution over codepoints.
    \item \texttt{global\_max} ($L_\infty$ scaling): $\Phi_{k,v} = \Lambda_{k,v} / \max_{k',v'} |\Lambda_{k',v'}|$.
    \item \texttt{spectral\_norm} (Spectral $L_2$ scaling): $\Phi_{k,v} = \Lambda_{k,v} / \|\mathbf{\Lambda}\|_2$, where $\|\mathbf{\Lambda}\|_2 = \sigma_{\max}(\mathbf{\Lambda})$ is the largest singular value of $\mathbf{\Lambda}$.
    \item \texttt{sinkhorn} (Sinkhorn-Knopp bi-stochastic scaling \cite{sinkhorn1967concerning}): $25$ alternating iterations rescale each topic row to total mass $V_{\text{rem}}/K$ and each codepoint column to unit mass, yielding $\mathbf{\Phi} = \mathbf{D}_r \mathbf{\Lambda} \mathbf{D}_c$ with positive diagonal $\mathbf{D}_r, \mathbf{D}_c$---the rectangular analogue of a doubly stochastic matrix with prescribed marginals $(V_{\text{rem}}/K, 1)$, approached up to the iteration budget.
\end{enumerate}
In greedy capacity-balanced assignment (\texttt{balance\_topics\_greedy}), entries $(k, v)$ are traversed in descending order of $\Phi_{k,v}$ and assigned if codepoint $v$ is unassigned and $|S_k| < C_{\text{slice}}$. Because \texttt{global\_max} and \texttt{spectral\_norm} rescale $\mathbf{\Lambda}$ by a positive global scalar $c > 0$, they induce the exact same global ordering under \texttt{balance\_topics\_greedy}, whereas \texttt{sinkhorn} jointly normalizes row and column competition to further reduce greedy 3G/2G latency (Table~\ref{tab:ablations}).

\begin{figure*}[t]
    \centering
    \includegraphics[width=0.99\textwidth]{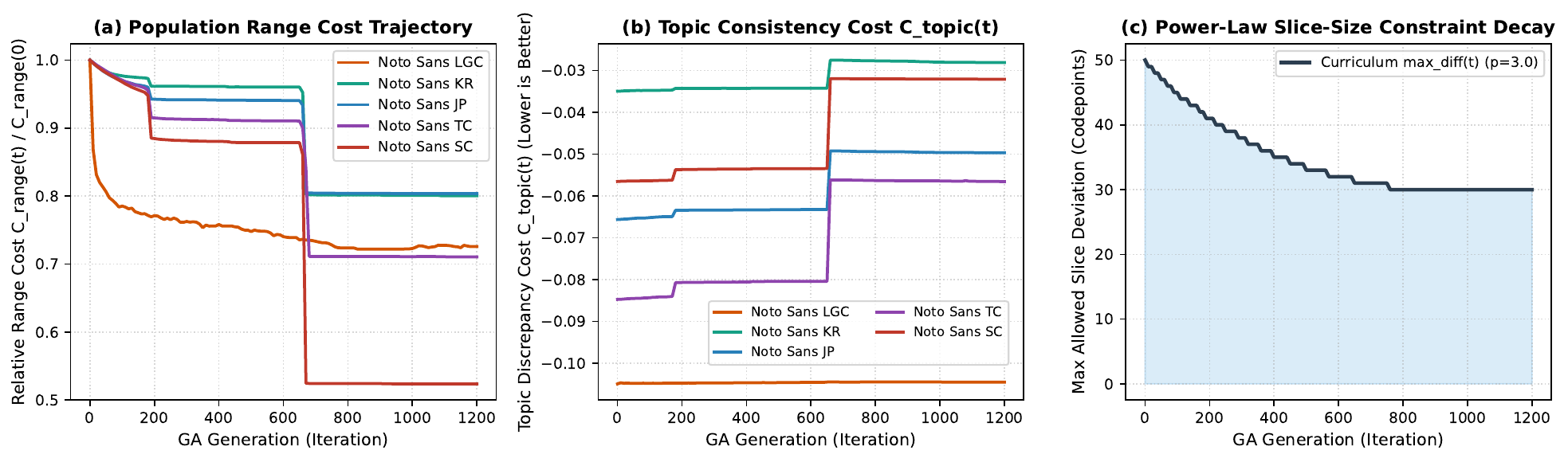}
    \caption{Convergence of the Memetic Range-Compact Genetic Algorithm (\texttt{RangeCompactGeneticLearner}) across all five Google Noto fonts over $T=1200$ generations: \textbf{(a)} relative population range fragmentation cost $C_{\text{range}}(t)/C_{\text{range}}(0)$, \textbf{(b)} best per-codepoint topic consistency cost $C_{\text{topic}}(t)/V_{\text{rem}}$, and \textbf{(c)} power-law slice-capacity curriculum $\text{max\_diff}(t)$ ($p=3.0$).}
    \label{fig:ga_conv}
\end{figure*}

\subsection{Tier 3: Memetic Range-Compact Genetic Algorithm}
\label{sec:ga}
Let $\mathbf{s} = (s_1, \dots, s_{V_{\text{rem}}}) \in \{0, \dots, K-1\}^{V_{\text{rem}}}$ denote a slice assignment vector over the sorted tail codepoints $c^{\text{rem}}_1 < \dots < c^{\text{rem}}_{V_{\text{rem}}}$, inducing $K$ topic slices $S_0^{\text{topic}}, \dots, S_{K-1}^{\text{topic}}$ that combine with the $M$ fixed Tier-1 slices to form $K_{\text{total}} = M + K$ total slices. Greedy assignment minimizes topic discrepancy but produces fragmented CSS \texttt{unicode-range} intervals whenever adjacent codepoints have different dominant topics.

We formulate range-compact slicing as a constrained memetic evolutionary optimization problem over population $\mathcal{P}_t = \{\mathbf{s}^{(1)}, \dots, \mathbf{s}^{(n_s)}\}$ ($n_s=20$ survivors, $40$ mutations, and $20$ crossovers per generation) across $T=1200$ generations. Each candidate $\mathbf{s}$ is evaluated by the weighted linear objective:
\begin{equation}
    \mathcal{J}(\mathbf{s}) = w_r \cdot C_{\text{range}}(\mathbf{s}) + (1 - w_r) \cdot C_{\text{topic}}(\mathbf{s}),
    \label{eq:ga_obj}
\end{equation}
where $w_r \in [0, 1]$ (\texttt{range\_cost\_weight}, $w_r = 0.012$ for LGC and $w_r = 0.18$ for CJK) and the raw range and topic consistency costs are:
\begin{align}
    C_{\text{range}}(\mathbf{s}) &= 1 + \sum_{v=1}^{V_{\text{rem}}-1} \mathbb{I}\!\left[c^{\text{rem}}_{v+1} \neq c^{\text{rem}}_v + 1 \;\lor\; s_{v+1} \neq s_v\right], \label{eq:c_range} \\
    C_{\text{topic}}(\mathbf{s}) &= -\sum_{v=1}^{V_{\text{rem}}} \Phi_{s_v, v} \le 0. \label{eq:c_topic}
\end{align}
Whereas the reported CSS interval count of the complete strategy is $R(\mathcal{S}) = \sum_{j=1}^{M} R(S_j^{\text{most}}) + \sum_{k=0}^{K-1} R(S_k^{\text{topic}})$ (Eq.~\ref{eq:c_range} vs.\ Section~2.1), $C_{\text{range}}(\mathbf{s})$ counts global adjacent breaks along $\mathcal{C}_{\text{rem}}$ so every local boundary transition is penalized in $\mathcal{O}(1)$ time (and Proposition~1 proves that smoothing adjacent breaks monotonically reduces both $C_{\text{range}}(\mathbf{s})$ and $\sum_k R(S_k^{\text{topic}})$). For cross-font visualization in Figure~\ref{fig:ga_conv}, history events log the per-codepoint normalized costs $C_{\text{range}}(\mathbf{s})/V_{\text{rem}}$ and $C_{\text{topic}}(\mathbf{s})/V_{\text{rem}}$.

\paragraph{Power-Law Slice-Capacity Curriculum.}
To prevent degenerate slice collapse while allowing flexible boundary shifts in early generations, generation $t \in \{0, \dots, T\}$ enforces a \emph{power-law curriculum bound} (\texttt{PowerDecay}) on the maximum slice-size spread $\max_k |S_k| - \min_k |S_k| \le \text{max\_diff}(t) \equiv \Delta(t)$ and non-vanishing constraint $\min_k |S_k| \ge 1$:
\begin{equation}
    \Delta(t) = \left\lfloor \Delta_{\min} + (\Delta_{\max} - \Delta_{\min})\,\max(0, 1 - t/T)^{p} \right\rfloor,
    \label{eq:max_diff}
\end{equation}
with $p = 3.0$ (\texttt{decay\_power}), $(\Delta_{\min}, \Delta_{\max}) = (20, 40)$ for \texttt{Noto Sans LGC}, and $(30, 50)$ for CJK fonts (Figure~\ref{fig:ga_conv}c).

\paragraph{Memetic 1D Potts-Viterbi Relaxation, Sparse-Tail Island Bridging, and Tier Invariance.}
On $30{,}000$-codepoint CJK fonts, purely random local mutations struggle to eliminate thousands of isolated single-codepoint islands scattered across contiguous Unicode blocks. We therefore augment the 7 evolutionary mutation operators (\texttt{MERGE\_SHORT\_RANGE} $0.30$, \texttt{LEFT\_SHIFT} $0.20$, \texttt{RIGHT\_SHIFT} $0.20$, \texttt{PURELY\_RANDOM} $0.10$, \texttt{SWAP\_TOPIC} $0.10$, \texttt{INCREASE\_MIN\_TOPIC} $0.05$, \texttt{DECREASE\_MAX\_TOPIC} $0.05$) with three domain-specific memetic mechanisms (Algorithm~1 and Proposition~2 in Appendix~\ref{app:math}):
\begin{enumerate}[leftmargin=*,nosep,label=(\arabic*)]
    \item \textbf{Lagrangian 1D Potts-Viterbi Dynamic Programming (\texttt{potts\_viterbi\_smooth\_topics}):} Observing that Eq.~\eqref{eq:ga_obj} without hard cardinality bounds is a 1D Potts Markov Random Field \cite{potts1952some} over maximal contiguous Unicode segments, we introduce dual slice-capacity multipliers $\boldsymbol{\mu} \in \mathbb{R}^K$ and solve the exact global MAP sequence via tropical min-sum Viterbi dynamic programming \cite{viterbi1967error} in $\mathcal{O}(V_{\text{rem}} K)$ time per sweep (reducing the naïve $\mathcal{O}(V_{\text{rem}} K^2)$ transition step to $\mathcal{O}(K)$ via $\min_{k'} \{D_{v-1,k'} + w_r \mathbb{I}[k' \neq k]\} = \min(D_{v-1,k},\, \min_{k'} D_{v-1,k'} + w_r)$).
    \item \textbf{Sparse-Tail Monochromatic Island Bridging (\texttt{build\_island\_bridged\_topics}):} On sparse CJK repertoires ($V_{\text{rem}} \ge 2K$), only $K_{\text{eff}} \ll K$ foreground slices $\mathcal{T}_{\text{fg}}$ carry core active characters while $K - K_{\text{eff}}$ background slices $\mathcal{T}_{\text{bg}}$ hold unobserved ($\text{DF}=0$) characters. Whenever an active character $c_v$ is surrounded by unobserved neighbors $c_{v-1}, c_{v+1}$, naive greedy assignment creates a 3-run split ($R = +2$ ranges). Our island-bridging operator identifies short background islands $[l, r]$ ($r - l + 1 \le \ell_{\max} = 4$) enclosed by identical foreground slices ($s_{l-1} = s_{r+1} = k_{\text{fg}} \in \mathcal{T}_{\text{fg}}$) and absorbs them into $k_{\text{fg}}$, strictly reducing $C_{\text{range}}$ by $2$ per bridged island while compacting remaining $\mathcal{T}_{\text{bg}}$ codepoints into single contiguous Unicode runs.
    \item \textbf{Tier-Capacity Invariance and Boundary-Preserving Repair (\texttt{\_repair\_max\_diff}):} All neighborhood mutations and capacity-repair transfers restrict donor/receiver swaps to remain within $\mathcal{T}_{\text{fg}}$ or within $\mathcal{T}_{\text{bg}}$ (preventing background codepoints from stealing foreground capacity and evicting active $\text{DF} \ge 2$ characters into background slices) and prioritize codepoints at interval boundaries so capacity repair never creates new interior interval splits.
\end{enumerate}

\section{Experimental Evaluation}
\label{sec:experiments}

\begin{table*}[t]
\centering
\caption{Main benchmark results on \textbf{\texttt{GIN-3g}} ($\tau_{\text{RTT}}=400$\,ms, $\beta_{\text{BW}}=50$\,bytes/ms, session length $L=10$) across all five Google Noto fonts. \emph{Missing (Eval / Font)} reports uncovered evaluation corpus codepoints and unassigned font \texttt{cmap} codepoints. Best full-coverage ($0 / 0$ missing) metrics per font are bolded.}
\label{tab:main_results}
\resizebox{0.99\textwidth}{!}{%
\begin{tabular}{llrrrrrrr}
\toprule
\textbf{Font Family} & \textbf{Strategy} & \textbf{Slices} & \textbf{Missing (Eval / Font)} & \textbf{CSS Ranges} & \textbf{Mean (ms)} & \textbf{P95 (ms)} & \textbf{Upfront Util} & \textbf{Session Util} \\
\midrule
\texttt{Noto Sans LGC} & \texttt{single\_slice} & 1 & 0 / 0 & \textbf{123} & 15.22 & 143.10 & 2.83\% & 8.66\% \\
($2,297$ cps) & \texttt{equal\_bin\_20} & 20 & 0 / 0 & 140 & 9.44 & 43.41 & 20.92\% & 23.17\% \\
 & \texttt{random\_slice\_20} & 20 & 0 / 0 & 2,180 & 27.11 & 256.54 & 2.91\% & 8.66\% \\
 & \texttt{language\_slice} (\texttt{.nam}) & 8 & 6 / 300 & 92 & 10.62 & 66.28 & 18.85\% & 15.08\% \\
 & \textbf{MLFont Online LDA (20 slices)} & 20 & \textbf{0 / 0} & 537 & 7.11 & 40.95 & 19.16\% & 31.26\% \\
 & \textbf{MLFont DNNTM (20 slices)} & 20 & \textbf{0 / 0} & 549 & \textbf{6.59} & \textbf{31.88} & \textbf{31.79\%} & \textbf{32.56\%} \\
 & \textbf{MLFont Range-Compact GA (20 slices)} & 20 & \textbf{0 / 0} & 384 & 7.06 & 40.06 & 18.05\% & 30.31\% \\
\midrule
\texttt{Noto Sans KR} & \texttt{single\_slice} & 1 & 0 / 0 & \textbf{4,598} & 62.73 & 435.34 & 0.91\% & 3.31\% \\
($22,853$ cps) & \texttt{equal\_bin\_52} & 52 & 0 / 0 & 4,635 & 68.09 & 350.87 & 1.70\% & 3.76\% \\
 & \texttt{random\_slice\_52} & 52 & 0 / 0 & 22,493 & 71.01 & 532.06 & 0.95\% & 3.31\% \\
 & \texttt{language\_slice} (\texttt{.nam}) & 10 & 34 / 4,322 & 4,579 & 51.20 & 380.39 & 1.33\% & 4.22\% \\
 & \texttt{most\_300\_300\_equal\_bin\_50} & 52 & 0 / 0 & 5,608 & 51.88 & 187.72 & 3.02\% & 3.81\% \\
 & \textbf{MLFont Online LDA (52 slices)} & 52 & \textbf{0 / 0} & 8,327 & \textbf{17.48} & 91.73 & 7.04\% & \textbf{12.03\%} \\
 & \textbf{MLFont DNNTM (52 slices)} & 52 & \textbf{0 / 0} & 8,354 & 17.59 & \textbf{87.21} & \textbf{8.88\%} & 11.51\% \\
 & \textbf{MLFont Range-Compact GA (52 slices)} & 52 & \textbf{0 / 0} & 6,769 & 21.06 & 93.00 & 6.32\% & 9.25\% \\
\midrule
\texttt{Noto Sans JP} & \texttt{single\_slice} & 1 & 0 / 0 & \textbf{4,830} & 23.72 & 175.72 & 2.15\% & 7.62\% \\
($16,663$ cps) & \texttt{equal\_bin\_62} & 62 & 0 / 0 & 4,876 & 25.58 & 187.34 & 2.76\% & 8.56\% \\
 & \texttt{random\_slice\_62} & 62 & 0 / 0 & 16,450 & 30.81 & 239.66 & 2.17\% & 7.62\% \\
 & \texttt{language\_slice} (\texttt{.nam}) & 10 & 43 / 7,632 & 5,617 & 11.42 & 77.00 & 5.00\% & 15.84\% \\
 & \texttt{most\_500\_500\_equal\_bin\_60} & 62 & 0 / 0 & 5,975 & 20.50 & 103.64 & 5.45\% & 8.92\% \\
 & \textbf{MLFont Online LDA (62 slices)} & 62 & \textbf{0 / 0} & 8,888 & 9.31 & 53.61 & 9.61\% & \textbf{21.31\%} \\
 & \textbf{MLFont DNNTM (62 slices)} & 62 & \textbf{0 / 0} & 8,710 & \textbf{9.30} & \textbf{50.47} & \textbf{10.22\%} & 20.88\% \\
 & \textbf{MLFont Range-Compact GA (62 slices)} & 62 & \textbf{0 / 0} & 7,298 & 11.37 & 59.70 & 8.91\% & 16.91\% \\
\midrule
\texttt{Noto Sans TC} & \texttt{single\_slice} & 1 & 0 / 0 & \textbf{5,036} & 31.43 & 176.24 & 2.02\% & 7.88\% \\
($20,637$ cps) & \texttt{equal\_bin\_82} & 82 & 0 / 0 & 5,098 & 31.03 & 185.64 & 2.70\% & 9.67\% \\
 & \texttt{random\_slice\_82} & 82 & 0 / 0 & 20,436 & 38.91 & 248.25 & 2.07\% & 7.88\% \\
 & \texttt{language\_slice} (\texttt{.nam}) & 10 & 126 / 11,699 & 5,767 & 13.23 & 71.40 & 5.10\% & 18.89\% \\
 & \texttt{most\_600\_600\_equal\_bin\_80} & 82 & 0 / 0 & 6,672 & 24.90 & 119.91 & 4.33\% & 9.83\% \\
 & \textbf{MLFont Online LDA (82 slices)} & 82 & \textbf{0 / 0} & 11,867 & \textbf{13.43} & 67.58 & 7.60\% & \textbf{19.59\%} \\
 & \textbf{MLFont DNNTM (82 slices)} & 82 & \textbf{0 / 0} & 11,410 & 13.84 & \textbf{60.72} & \textbf{7.63\%} & 19.30\% \\
 & \textbf{MLFont Range-Compact GA (82 slices)} & 82 & \textbf{0 / 0} & 8,743 & 15.07 & 71.62 & 7.07\% & 17.37\% \\
\midrule
\texttt{Noto Sans SC} & \texttt{single\_slice} & 1 & 0 / 0 & \textbf{418} & 63.47 & 407.72 & 1.12\% & 5.55\% \\
($30,759$ cps) & \texttt{equal\_bin\_102} & 102 & 0 / 0 & 519 & 45.26 & 310.69 & 1.82\% & 7.63\% \\
 & \texttt{random\_slice\_102} & 102 & 0 / 0 & 30,457 & 64.71 & 514.64 & 1.20\% & 5.55\% \\
 & \texttt{language\_slice} (\texttt{.nam}) & 10 & 126 / 19,659 & 6,646 & 21.82 & 139.59 & 3.40\% & 16.05\% \\
 & \texttt{most\_1000\_500\_equal\_bin\_100} & 102 & 0 / 0 & 2,965 & 29.38 & 131.07 & 4.25\% & 7.82\% \\
 & \textbf{MLFont Online LDA (102 slices)} & 102 & \textbf{0 / 0} & 10,177 & 15.93 & 83.23 & 6.30\% & \textbf{16.90\%} \\
 & \textbf{MLFont DNNTM (102 slices)} & 102 & \textbf{0 / 0} & 9,640 & \textbf{15.71} & \textbf{82.49} & \textbf{6.72\%} & 16.57\% \\
 & \textbf{MLFont Range-Compact GA (102 slices)} & 102 & \textbf{0 / 0} & 5,935 & 17.67 & 94.33 & 5.97\% & 14.64\% \\
\bottomrule
\end{tabular}%
}
\end{table*}

\subsection{Experimental Setup and Datasets}
We evaluate MLFont on five Google Noto fonts using uniformly sampled, deterministically shuffled (\texttt{seed=42}) train/test splits from \texttt{wikimedia/wikipedia} (\texttt{20231101}):
\begin{enumerate}[leftmargin=*,nosep,label=(\arabic*)]
    \item \textbf{\texttt{Noto Sans LGC}} (\texttt{NotoSans-Regular.ttf}): $V = 2{,}297$ valid \texttt{cmap} codepoints ($V_{\text{active}} = 491$, $|\mathcal{C}_{\text{core}}| = 491$, $K_{\text{eff}} = 5$ core slices of $K=20$); $1{,}500$ train and $300$ test Wikipedia pages sampled uniformly across 6 languages (\texttt{en}, \texttt{de}, \texttt{fr}, \texttt{pl}, \texttt{ru}, \texttt{el}).
    \item \textbf{\texttt{Noto Sans KR}} (\texttt{NotoSansKR-Regular.otf}): $V = 22{,}853$ codepoints ($\mathbf{m} = (300, 300)$, $V_{\text{rem}} = 22{,}253$, $V_{\text{active}} = 3{,}198$ in tail, $|\mathcal{C}_{\text{core}}| = 2{,}085$, $K_{\text{eff}} = 6$ core slices of $K=50$, $52$ total slices); $1{,}600$ train / $400$ test Korean (\texttt{ko}) pages.
    \item \textbf{\texttt{Noto Sans JP}} (\texttt{NotoSansJP-Regular.otf}): $V = 16{,}663$ codepoints ($\mathbf{m} = (500, 500)$, $V_{\text{rem}} = 15{,}663$, $V_{\text{active}} = 3{,}238$ in tail, $|\mathcal{C}_{\text{core}}| = 2{,}512$, $K_{\text{eff}} = 12$ core slices of $K=60$, $62$ total slices); $1{,}600$ train / $400$ test Japanese (\texttt{ja}) pages.
    \item \textbf{\texttt{Noto Sans TC}} (\texttt{NotoSansTC-Regular.otf}): $V = 20{,}637$ codepoints ($\mathbf{m} = (600, 600)$, $V_{\text{rem}} = 19{,}437$, $V_{\text{active}} = 5{,}325$ in tail, $|\mathcal{C}_{\text{core}}| = 4{,}078$, $K_{\text{eff}} = 20$ core slices of $K=80$, $82$ total slices); $1{,}600$ train / $400$ test Chinese Wikipedia (\texttt{zh}) pages filtered to Traditional script by marker characters.
    \item \textbf{\texttt{Noto Sans SC}} (\texttt{NotoSansSC-Regular.otf}): $V = 30{,}759$ codepoints ($\mathbf{m} = (1000, 500)$, $V_{\text{rem}} = 29{,}259$, $V_{\text{active}} = 5{,}925$ in tail, $|\mathcal{C}_{\text{core}}| = 4{,}461$, $K_{\text{eff}} = 18$ core slices of $K=100$, $102$ total slices); $1{,}600$ train / $400$ test Chinese Wikipedia (\texttt{zh}) pages filtered to Simplified script by marker characters.
\end{enumerate}

We evaluate across seven standard network/session configurations defined in \texttt{STANDARD\_EVALUATION\_PARAMS}: \textbf{\texttt{GIN-2g-poor}} ($\tau_{\text{RTT}}=1600$\,ms, $\beta_{\text{BW}}=2.5$\,B/ms, $L=10$), \textbf{\texttt{GIN-2g}} ($\tau_{\text{RTT}}=1200$\,ms, $\beta_{\text{BW}}=12.5$\,B/ms, $L=10$), \textbf{\texttt{GIN-3g}} ($\tau_{\text{RTT}}=400$\,ms, $\beta_{\text{BW}}=50$\,B/ms, $L=10$), \textbf{\texttt{residential-DSL}} ($\tau_{\text{RTT}}=50$\,ms, $\beta_{\text{BW}}=250$\,B/ms, $L=10$), \textbf{\texttt{residential-ADSL2+}} ($\tau_{\text{RTT}}=50$\,ms, $\beta_{\text{BW}}=2500$\,B/ms, $L=10$), \textbf{\texttt{GIN-3g\_no\_session}} ($L=1$), and \textbf{\texttt{GIN-3g\_short\_session}} ($L=3$). We compare our three MLFont learners (\textbf{\texttt{Online LDA}}, \textbf{\texttt{DNNTM}}, and \textbf{\texttt{Range-Compact GA}}) against five heuristic baselines---\texttt{single\_slice}, \texttt{equal\_bin\_K}, \texttt{random\_slice\_K}, \texttt{language\_slice} (using the 12 canonical Unicode \texttt{.nam} script subset files), and \texttt{most\_M\_equal\_bin\_K}.

\begin{figure*}[t]
    \centering
    \includegraphics[width=0.99\textwidth]{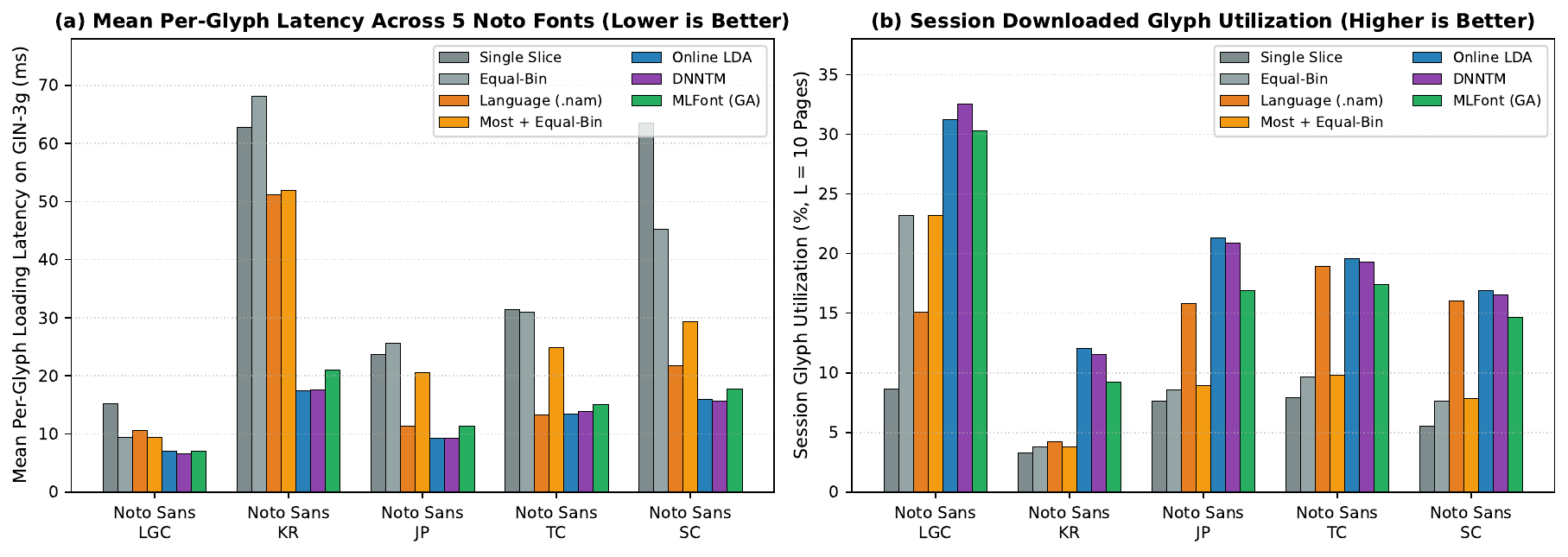}
    \caption{Cross-font evaluation on \texttt{GIN-3g} ($L=10$ pages/session) across all five Google Noto fonts: \textbf{(a)} Mean per-glyph download latency (ms, lower is better) and \textbf{(b)} Session glyph cache utilization (\%, higher is better).}
    \label{fig:benchmark_latency}
\end{figure*}

\subsection{Main Benchmark Results}
Table~\ref{tab:main_results} and Figure~\ref{fig:benchmark_latency} summarize our primary evaluation across all five Google Noto fonts on \texttt{GIN-3g}. Three consistent findings emerge:
\begin{enumerate}[leftmargin=*,nosep,label=(\arabic*)]
    \item \textbf{Substantial Latency Reduction over Heuristics:} Across all five fonts, MLFont reduces mean per-glyph latency on \texttt{GIN-3g} by \textbf{56.7\% to 75.3\%} compared to \texttt{single\_slice} (e.g., from $63.47$\,ms to \textbf{15.71\,ms} on \texttt{Noto Sans SC} and from $62.73$\,ms to \textbf{17.48\,ms} on \texttt{Noto Sans KR}), and by \textbf{46.1\% to 66.3\%} compared to the strong two-tier heuristic \texttt{most\_M\_equal\_bin\_K} across all four CJK fonts.
    \item \textbf{Eliminating Missing Glyphs vs.\ Static Script Slicing:} Because the 12 canonical Unicode \texttt{.nam} script files cover only standard regional character sets, \texttt{language\_slice} leaves $300$ font codepoints unassigned in \texttt{Noto Sans LGC} and $4{,}322$ to $19{,}659$ codepoints unassigned across CJK fonts (dropping up to $126$ distinct characters that occur on our Wikipedia test pages). All MLFont strategies guarantee \textbf{0 missing codepoints} ($100\%$ font coverage) while outperforming \texttt{language\_slice} on mean latency on \texttt{Noto Sans LGC} ($6.59$\,ms vs.\ $10.62$\,ms), \texttt{KR} ($17.48$\,ms vs.\ $51.20$\,ms), \texttt{JP} ($9.30$\,ms vs.\ $11.42$\,ms), and \texttt{SC} ($15.71$\,ms vs.\ $21.82$\,ms); on \texttt{TC}, mean latency is on par ($13.43$\,ms vs.\ $13.23$\,ms) while MLFont reduces P95 tail latency by \textbf{15.0\%} ($60.72$\,ms vs.\ $71.40$\,ms) with higher session utilization ($19.59\%$ vs.\ $18.89\%$).
    \item \textbf{Memetic Range-Compact GA Compresses CSS Rules by 17.9\%--41.7\%:} Across all five fonts, our Memetic Range-Compact GA dramatically compresses CSS \texttt{unicode-range} intervals relative to greedy Online LDA: by \textbf{28.5\%} on \texttt{LGC} ($384$ vs.\ $537$ ranges, and slightly faster latency at $7.06$\,ms vs.\ $7.11$\,ms), \textbf{18.7\%} on \texttt{KR} ($6{,}769$ vs.\ $8{,}327$), \textbf{17.9\%} on \texttt{JP} ($7{,}298$ vs.\ $8{,}888$), \textbf{26.3\%} on \texttt{TC} ($8{,}743$ vs.\ $11{,}867$), and \textbf{41.7\%} on \texttt{SC} ($5{,}935$ vs.\ $10{,}177$), while retaining nearly identical low network latency ($17.67$\,ms vs.\ $15.93$\,ms on \texttt{SC}) and nearly doubling session cache utilization over \texttt{most\_1000\_500\_equal\_bin\_100} ($14.64\%$ vs.\ $7.82\%$).
\end{enumerate}

\subsection{Ablation and Multi-Seed Stability Studies}
\label{sec:ablations}

\begin{table}[!htbp]
\centering
\caption{Ablation studies on \texttt{Noto Sans LGC} ($K=20$): \textbf{(Top)} Topic affinity matrix normalization. \textbf{(Middle)} DNNTM training loss objectives. \textbf{(Bottom)} Topic slice count $K$ ($K_{\text{eff}}$).}
\label{tab:ablations}
\resizebox{0.99\columnwidth}{!}{%
\begin{tabular}{lrrrr}
\toprule
\textbf{Normalization ($K=20$)} & \textbf{Ranges} & \textbf{2G (ms)} & \textbf{3G (ms)} & \textbf{Sess Util} \\
\midrule
\texttt{row\_sum} (Row $L_1$) & \textbf{421} & 25.15 & 7.12 & 31.15\% \\
\texttt{topic\_sum} (Col $L_1$) & 549 & 25.17 & 7.12 & 31.26\% \\
\texttt{global\_max} ($L_\infty$) & 537 & 25.15 & 7.11 & 31.26\% \\
\texttt{spectral\_norm} ($L_2$) & 537 & 25.15 & 7.11 & 31.26\% \\
\textbf{\texttt{sinkhorn} (Bi-stochastic)} & 542 & \textbf{24.65} & \textbf{6.98} & \textbf{31.77\%} \\
\midrule
\textbf{DNNTM Loss Objective ($K=20$)} & \textbf{Ranges} & \textbf{2G (ms)} & \textbf{3G (ms)} & \textbf{Sess Util} \\
\midrule
\texttt{AUTOENCODER} (Sparse KL+Ent) & 590 & 26.70 & 7.55 & 30.75\% \\
\texttt{HYBRID\_AUTOENCODER} (Diff+AE) & 565 & 24.95 & 7.06 & 30.66\% \\
\texttt{REAL\_FAKE\_AND\_REAL\_AVG\_DIFF} & 497 & 24.32 & 6.88 & 31.01\% \\
\texttt{REAL\_FAKE\_DIFF} & \textbf{480} & 23.85 & 6.75 & 31.70\% \\
\textbf{\texttt{ALL\_COMBINED} (Default)} & 549 & \textbf{23.29} & \textbf{6.59} & \textbf{32.56\%} \\
\midrule
\textbf{Topic Slice Count $K$ ($K_{\text{eff}}$)} & \textbf{Ranges} & \textbf{2G (ms)} & \textbf{3G (ms)} & \textbf{Sess Util} \\
\midrule
$K = 5$ ($K_{\text{eff}} = 2$) & \textbf{397} & 29.65 & 7.76 & 19.24\% \\
\textbf{$K = 10$ ($K_{\text{eff}} = 3$)} & 460 & 25.57 & \textbf{6.92} & 24.44\% \\
\textbf{$K = 20$ ($K_{\text{eff}} = 5$, Default)} & 537 & \textbf{25.15} & 7.11 & 31.26\% \\
$K = 40$ ($K_{\text{eff}} = 10$) & 589 & 29.75 & 8.84 & 35.79\% \\
$K = 60$ ($K_{\text{eff}} = 15$) & 645 & 34.20 & 10.42 & \textbf{39.39\%} \\
\bottomrule
\end{tabular}%
}
\end{table}

\paragraph{Topic Matrix Normalization, DNNTM Loss, and Slice Count $K$.}
Table~\ref{tab:ablations} (Top) compares the five normalization operators on \texttt{Noto Sans LGC} ($K=20$): Sinkhorn-Knopp bi-stochastic normalization (\texttt{sinkhorn}) achieves the lowest greedy 3G latency (\textbf{6.98\,ms}), 2G latency (\textbf{24.65\,ms}), and highest session utilization (\textbf{31.77\%}) by jointly equalizing row capacities and column competition. Table~\ref{tab:ablations} (Middle) compares all five DNNTM objectives: while pure nonnegative sparse autoencoder reconstruction (\texttt{AUTOENCODER}, $7.55$\,ms) outperforms heuristics, combining it with discriminative entropy-difference learning (\texttt{HYBRID\_AUTOENCODER}, $7.06$\,ms) and full three-term entropy-difference optimization (\texttt{ALL\_COMBINED}, \textbf{6.59\,ms}, $32.56\%$ utilization) yields the sharpest co-occurrence clusters. Table~\ref{tab:ablations} (Bottom) illustrates the RTT vs.\ bandwidth trade-off as $K$ varies from $5$ to $60$: session utilization increases monotonically with $K$ ($19.24\%$ to $39.39\%$), while 2G and DSL mean latency are minimized at $K=20$ (3G mean latency is minimized at $K=10$).

\begin{table}[!htbp]
\centering
\caption{Effect of browsing session length $L \in \{1, 3, 10\}$ on mean per-glyph latency (ms) and session utilization ($L=10$) on \texttt{Noto Sans SC} (\texttt{GIN-3g}).}
\label{tab:session_ablation}
\resizebox{0.99\columnwidth}{!}{%
\begin{tabular}{lrrrr}
\toprule
\textbf{Strategy (\texttt{Noto Sans SC})} & \textbf{$L=1$ (ms)} & \textbf{$L=3$ (ms)} & \textbf{$L=10$ (ms)} & \textbf{Sess Util ($L=10$)} \\
\midrule
\texttt{single\_slice} & 418.33 & 152.09 & 63.47 & 5.55\% \\
\texttt{equal\_bin\_102} & 296.91 & 116.97 & 45.26 & 7.63\% \\
\texttt{most\_1000\_500\_equal\_bin\_100} & 130.06 & 73.57 & 29.38 & 7.82\% \\
\textbf{MLFont Online LDA ($102$)} & 81.79 & 36.68 & 15.93 & \textbf{16.90\%} \\
\textbf{MLFont DNNTM ($102$)} & \textbf{78.20} & \textbf{35.89} & \textbf{15.71} & 16.57\% \\
\textbf{MLFont Range-Compact GA ($102$)} & 89.11 & 40.78 & 17.67 & 14.64\% \\
\bottomrule
\end{tabular}%
}
\end{table}

\paragraph{Browsing Session Length $L$ and Multi-Seed Reproducibility.}
Table~\ref{tab:session_ablation} evaluates \texttt{Noto Sans SC} across cold-cache single-page visits ($L=1$), short sessions ($L=3$), and standard sessions ($L=10$). Even at $L=1$, \textbf{MLFont DNNTM} ($78.20$\,ms) and \textbf{Online LDA} ($81.79$\,ms) reduce latency by \textbf{81.3\%} and \textbf{80.4\%} over \texttt{single\_slice} ($418.33$\,ms) and by \textbf{39.9\%} and \textbf{37.1\%} over \texttt{most\_1000\_500\_equal\_bin\_100} ($130.06$\,ms). Across three independent random seeds ($\{42, 43, 44\}$), MLFont exhibits tight stability: on \texttt{Noto Sans SC}, \texttt{GIN-3g} mean latency and range counts are $15.83 \pm 0.08$\,ms ($10{,}163 \pm 34$ ranges) for Online LDA, $15.83 \pm 0.09$\,ms ($9{,}698 \pm 44$ ranges) for DNNTM, and $17.70 \pm 0.11$\,ms ($5{,}932 \pm 3$ ranges) for Range-Compact GA (Table~\ref{tab:multi_seed_ci} in Appendix~\ref{app:benchmarks}).

\subsection{Real-Website Font Loading Simulation and Headless Chrome Verification}
\label{sec:website_sim}
To let frontend developers evaluate how fast MLFont-sliced webfonts load on arbitrary HTML websites and network conditions before deployment, our library provides an interactive side-by-side browser simulator (\texttt{web/demo.html}), a programmatic HTML/URL website simulator (\texttt{WebsiteFontSimulator} / \texttt{mlfont-simulate}), and an automated Headless Chrome benchmark (\texttt{run\_headless\_chrome\_benchmark}). Across $20$ representative multi-domain websites spanning all five Google Noto fonts (Figure~\ref{fig:website_simulation} and Table~\ref{tab:website_simulation} in Appendix~\ref{app:benchmarks}), MLFont Range-Compact GA achieves \textbf{1.56$\times$ to 6.09$\times$ cold-cache page font load speedups} over \texttt{single\_slice} ($3.43\times$ mean and up to $4.97\times$ on \texttt{Noto Sans SC} with \textbf{72.0\% bandwidth savings}; $6.09\times$ mean and $84.9\%$ bandwidth savings on \texttt{Noto Sans KR}; $65.4\%$--$84.9\%$ across CJK and $44.7\%$ on LGC), while progressive Wave-1 delivery of the Tier-1 \texttt{most} slices renders $84.9\%$--$96.9\%$ of visible CJK page tokens, with first text paint at \textbf{748\,ms} on \texttt{Cable-5Mbps} for \texttt{Noto Sans SC} (vs.\ $9{,}869$\,ms for the unsliced $5{,}987$\,KB font; in \texttt{WebsiteFontSimulator}, all two-tier strategies share the exact physical Tier-1 \texttt{.woff2} slice bytes, and their minor First Text Paint differences reflect the Wave-0 gzipped CSS size of uncompacted \texttt{online\_lda}/\texttt{dnntm} vs.\ \texttt{range\_compact}). Executing real Headless Chrome (\texttt{v155.0.8059.79}) against the physical \texttt{.woff2} slices ($3$ pages $\times$ $3$ strategies per font $= 45$ cases) confirms \textbf{45/45 (100.0\%) exact slice-set agreement} between \texttt{WebsiteFontSimulator} and Chromium Blink's \texttt{document.fonts} engine, achieving \textbf{1.54$\times$ to 6.02$\times$ mean end-to-end \texttt{document.fonts.ready} speedups} and reducing first-page local WOFF2 decompression and OpenType instantiation latency by \textbf{1.17$\times$ to 4.15$\times$}.

\paragraph{Limitations and Systems Considerations.}
Three practical systems factors govern real-world \texttt{unicode-range} deployments: (1) \emph{WOFF2 OpenType table replication:} splitting a font into $K_{\text{total}}$ physical \texttt{.woff2} files replicates shared OpenType tables (\texttt{CFF}/\texttt{glyf}, \texttt{hmtx}, \texttt{GSUB}/\texttt{GPOS}), increasing static CDN storage by $+46.6\%$ to $+175.3\%$ (Table~\ref{tab:woff2_byte_comparison}); across the 8-page \texttt{Noto Sans SC} website session, heuristic slicing (\texttt{equal\_bin\_102} fetching $68.1$ cumulative slices and \texttt{most\_1000\_500\_equal\_bin\_100} fetching $71.8$ cumulative slices) incurs \emph{negative} net session byte savings ($-1.6\%$ and $-7.6\%$) relative to the monolithic WOFF2 font, whereas MLFont restricts cumulative fetches to $29.8$--$39.9$ co-occurring slices and achieves $+38.2\%$ to $+52.9\%$ net session byte savings; (2) \emph{Cross-slice OpenType shaping:} \texttt{GSUB}/\texttt{GPOS} ligatures and kerning cannot span distinct \texttt{@font-face} slice files, so complex shaping scripts (e.g., Arabic/Indic) should group shaping clusters within shared slices; and (3) \emph{Serial RTT surrogate vs.\ HTTP/2 multiplexing:} Tier-2/3 training optimizes the additive 1-RTT per-slice surrogate (Eq.~\ref{eq:rtt_lat}) for linear tractability, whereas \texttt{WebsiteFontSimulator} and Headless Chrome evaluate wave-multiplexed parallel fetching.

\section{Conclusion}
\label{sec:conclusion}
We presented \textbf{MLFont}, a machine learning and evolutionary optimization framework for web font subsetting and \texttt{unicode-range} slicing. By combining Tier-1 Zipfian frequency isolation, Tier-2 Online LDA and DNNTM topic modeling with two-tier contiguous-bin Bayes alignment and Sinkhorn-Knopp normalization, and a Tier-3 Memetic Range-Compact GA with power-law curriculum constraints, 1D Potts-Viterbi relaxation, and sparse-tail island bridging, MLFont cuts multi-page per-glyph network latency by $56.7\%$--$75.3\%$ across open-source Google Noto benchmarks and accelerates real-website font loading by $1.56\times$--$6.09\times$ ($65.4\%$--$84.9\%$ CJK and $44.7\%$ LGC bandwidth reduction, with $45/45$ exact Headless Chrome slice-set parity) while compressing CSS rules by up to $41.7\%$ and guaranteeing $100\%$ glyph coverage.

\clearpage
\bibliographystyle{plain}
{\small
\bibliography{references}
}

\clearpage
\appendix
\input{appendix}

\end{document}

%% file: appendix.tex
\section{Open-Source Codebase and Design Decisions}
\label{app:codebase}

To ensure long-term reproducibility and facilitate adoption by web typography engineers and machine learning researchers, we provide a complete open-source implementation of \textbf{MLFont} (\texttt{mlfont})---an open-source reproduction effort of Google's machine learning font slicing technology \cite{wang2017learning,googlefonts2017post}---under the Apache 2.0 License at \url{https://github.com/wq2012/mlfont} and on PyPI at \url{https://pypi.org/project/mlfont/}.

\subsection{Modular Software Architecture}
The \texttt{mlfont} library is structured into modular, dependency-isolated components with zero proprietary dependencies:
\begin{itemize}[leftmargin=*,nosep]
    \item \texttt{mlfont.schemas}: Strongly typed dataclasses (\texttt{SlicingStrategy}, \texttt{EvaluationParams}, \texttt{DistributionStats}, \texttt{StrategyEvaluationResult}, \texttt{DecayConfig}, \texttt{RangeCompactConfig}, \texttt{GeneticHistoryEvent}) with deterministic JSON and CSS \texttt{unicode-range} serialization (\texttt{U+XXXX-YYYY}) and contiguous interval merging.
    \item \texttt{mlfont.data}: OpenType/TrueType \texttt{cmap} codepoint extraction via \texttt{fontTools} (\texttt{extract\_codepoints\_from\_font}), canonical Unicode \texttt{.nam} subset file parsing, and document-codepoint co-occurrence matrix construction (\texttt{DocumentCorpusProcessor}).
    \item \texttt{mlfont.frequency}: Zipfian document-frequency partitioner (\texttt{get\_slices\_based\_on\_frequency}, \texttt{FrequencySolver}) and contiguous equal-bin splitter (\texttt{get\_equally\_sized\_chunks}).
    \item \texttt{mlfont.maths}: Topic-codepoint affinity matrix normalization operators (\texttt{topic\_sum}, \texttt{row\_sum}, \texttt{global\_max}, spectral $L_2$ \texttt{spectral\_norm}, and Sinkhorn-Knopp bi-stochastic \texttt{sinkhorn}), curriculum decay schedules (\texttt{PowerDecay}, \texttt{ExponentialDecay}), and greedy/diff-constrained capacity-balanced slice assignment (\texttt{balance\_topics\_greedy}, \texttt{balance\_topics\_diff\_constrained}).
    \item \texttt{mlfont.olda}: Variational Bayes Online Latent Dirichlet Allocation (\texttt{OnlineLdaLearner} / \texttt{OnlineLDASolver}) with two-tier Hungarian contiguous-bin Bayes posterior alignment (\texttt{align\_and\_smooth\_topic\_matrix}).
    \item \texttt{mlfont.dnntm}: PyTorch implementation of the Deep Neural Network Topic Model (\texttt{DnnTmLearner} / \texttt{DNNTMSolver}), unifying discriminative non-negative Shannon Entropy Difference Losses (\texttt{ALL\_COMBINED}, \texttt{REAL\_FAKE\_DIFF}, \texttt{REAL\_FAKE\_AND\_REAL\_AVERAGE\_DIFF}) with nonnegative sparse autoencoder reconstruction (\texttt{AUTOENCODER}, \texttt{HYBRID\_AUTOENCODER}).
    \item \texttt{mlfont.range\_compact}: Memetic Range-Compact Genetic Algorithm (\texttt{RangeCompactGeneticLearner} / \texttt{RangeCompactGASolver}) with power-law slice-capacity curriculum $\text{max\_diff}(t)$, Lagrangian 1D Potts-Viterbi dynamic programming (\texttt{potts\_viterbi\_smooth\_topics}), sparse-tail monochromatic island bridging (\texttt{build\_island\_bridged\_topics}), tier-capacity invariance, single-point crossover, seven mutation operators, and boundary-preserving batch capacity repair.
    \item \texttt{mlfont.baselines}: Reference implementations of \texttt{single\_slice}, \texttt{equal\_bin}, \texttt{random\_slice}, \texttt{language\_slice}, \texttt{most\_and\_equal\_bin}, and \texttt{greedy\_topic}.
    \item \texttt{mlfont.subsetter}: Parallel physical font subsetting engine (\texttt{slice\_font\_file} / \texttt{slice\_font\_to\_woff2}, \texttt{FontSubsetter}) powered by \texttt{fontTools.subset} and Brotli compression to emit production \texttt{.woff2} slice binaries and \texttt{@font-face} CSS stylesheets.
    \item \texttt{mlfont.evaluator}: Multi-page browsing session simulator (\texttt{FontLoadingEvaluator}) supporting arbitrary session lengths $L$, RTT/bandwidth profiles, and both canonical ($200$\,B/glyph) and exact physical WOFF2 slice byte sizes.
    \item \texttt{mlfont.simulator}: Real-website HTML/URL font-loading simulator (\texttt{WebsiteFontSimulator}, \texttt{mlfont-simulate}) and automated Headless Chrome benchmark (\texttt{run\_headless\_chrome\_benchmark}), paired with an interactive side-by-side browser simulator (\texttt{web/demo.html}).
    \item \texttt{mlfont.export}: Safe, zero-code-execution model serialization using Hugging Face \texttt{safetensors} (\texttt{model.safetensors}) alongside \texttt{.nam} subset files, held-out \texttt{test\_samples.json}, and automated Model Card generation.
\end{itemize}

\subsection{Algorithmic Summary and Open-Source Capabilities}
Table~\ref{tab:arch_alignment} summarizes the core algorithmic components and open-source engineering capabilities of the \texttt{mlfont} toolkit, spanning two-tier Zipfian/topic decomposition, Sinkhorn-Knopp bi-stochastic normalization, unified discriminative + nonnegative autoencoder DNNTM objectives, memetic 1D Potts-Viterbi + sparse-tail island-bridging optimization, and end-to-end website/browser benchmarking.

\begin{table*}[!htbp]
\centering
\caption{Summary of core algorithmic modules and engineering capabilities in the open-source \texttt{mlfont} library.}
\label{tab:arch_alignment}
\resizebox{0.98\textwidth}{!}{%
\begin{tabular}{ll}
\toprule
\textbf{System Component} & \textbf{Open-Source Implementation (\texttt{mlfont})} \\
\midrule
Data \& Corpus Processing & Pure Python 3.10+ with NumPy \& SciPy sparse co-occurrence matrices over public Wikimedia/Wikipedia corpora (\texttt{wikimedia/wikipedia}) \\
Tier-1 Frequency Isolation & Deterministic Zipfian document-frequency partitioner (\texttt{FrequencySolver}, \texttt{most\_1..M}) \\
Online LDA Solver & \texttt{OnlineLdaLearner} with two-tier Hungarian contiguous-bin Bayes alignment + Sinkhorn-Knopp bi-stochastic scaling \\
Neural Topic Model (DNNTM) & PyTorch \texttt{DnnTmLearner} unifying non-negative Shannon Entropy Difference Losses with Nonnegative Sparse Autoencoder reconstruction \\
Combinatorial Optimizer & Memetic \texttt{RangeCompactGeneticLearner} (\texttt{PowerDecay} $p=3.0$ + Lagrangian 1D Potts-Viterbi DP + Sparse-Tail Island Bridging) \\
Physical Font Subsetting & \texttt{fontTools.subset} (\texttt{woff2} Brotli compression) + CSS3 \texttt{@font-face} \texttt{unicode-range} stylesheet generator \\
Website \& Browser Simulator & \texttt{WebsiteFontSimulator} (\texttt{mlfont-simulate}) + interactive \texttt{web/demo.html} + Headless Chrome \texttt{document.fonts} benchmark \\
Checkpoint Serialization & Safe, zero-code-execution Hugging Face \texttt{safetensors} (\texttt{model.safetensors}) + JSON strategy bundles \\
\bottomrule
\end{tabular}%
}
\end{table*}

\section{Open-Dataset Benchmark and Pretrained Models}
\label{app:benchmarks}

\subsection{Pretrained Hugging Face Model and WOFF2 Slice Catalog}
We release five pretrained MLFont model bundles and physically subsetted WOFF2 Google Noto font packages on the Hugging Face Hub in the \textbf{MLFont Collection} at \url{https://huggingface.co/collections/wq2012/mlfont} (and in the GitHub repository's \texttt{pretrained\_models/} directory at \url{https://github.com/wq2012/mlfont}). Each bundle provides \texttt{model.safetensors} (storing \texttt{codepoints}, \texttt{remaining\_codepoints}, \texttt{frequency\_array}, \texttt{lambda\_matrix}, \texttt{dnntm\_lambda\_matrix}, and \texttt{dna\_assignment}), \texttt{strategy.json}, \texttt{olda\_strategy.json}, \texttt{dnntm\_strategy.json}, \texttt{test\_samples.json}, \texttt{font\_sliced.css}, physical \texttt{woff2/slice\_*.woff2} files, and \texttt{nam/slice\_*.nam} files (Table~\ref{tab:pretrained_catalog}).

\begin{table*}[!htbp]
\centering
\caption{Summary of released pretrained MLFont models and physically sliced WOFF2 Google Noto font bundles on Hugging Face (\url{https://huggingface.co/collections/wq2012/mlfont}).}
\label{tab:pretrained_catalog}
\resizebox{0.99\textwidth}{!}{%
\begin{tabular}{lllrrrrr}
\toprule
\textbf{Hugging Face Repository} & \textbf{Font Binary} & \textbf{Languages / Scripts} & \textbf{Codepoints} & \textbf{Tier-1 (\texttt{most})} & \textbf{Topics $K$ ($K_{\text{eff}}$)} & \textbf{Total Slices} & \textbf{Physical WOFF2 (CSS)} \\
\midrule
\href{https://huggingface.co/wq2012/mlfont-notosans-lgc}{\texttt{wq2012/mlfont-notosans-lgc}} & \texttt{NotoSans-Regular.ttf} & en, de, fr, pl, ru, el & $2,297$ & $\emptyset$ & $20$ ($5$) & $20$ & $434.1$\,KB ($7.8$\,KB CSS) \\
\href{https://huggingface.co/wq2012/mlfont-notosanskr}{\texttt{wq2012/mlfont-notosanskr}} & \texttt{NotoSansKR-Regular.otf} & Korean (\texttt{ko}, Hangul/Hanja) & $22,853$ & $(300, 300)$ & $50$ ($6$) & $52$ & $5.29$\,MB ($78.0$\,KB CSS) \\
\href{https://huggingface.co/wq2012/mlfont-notosansjp}{\texttt{wq2012/mlfont-notosansjp}} & \texttt{NotoSansJP-Regular.otf} & Japanese (\texttt{ja}, Kanji/Kana) & $16,663$ & $(500, 500)$ & $60$ ($12$) & $62$ & $4.67$\,MB ($85.3$\,KB CSS) \\
\href{https://huggingface.co/wq2012/mlfont-notosanstc}{\texttt{wq2012/mlfont-notosanstc}} & \texttt{NotoSansTC-Regular.otf} & Trad.\ Chinese (\texttt{zh-Hant}) & $20,637$ & $(600, 600)$ & $80$ ($20$) & $82$ & $5.94$\,MB ($104.1$\,KB CSS) \\
\href{https://huggingface.co/wq2012/mlfont-notosanssc}{\texttt{wq2012/mlfont-notosanssc}} & \texttt{NotoSansSC-Regular.otf} & Simp.\ Chinese (\texttt{zh-Hans}) & $30,759$ & $(1000, 500)$ & $100$ ($18$) & $102$ & $8.89$\,MB ($83.9$\,KB CSS) \\
\bottomrule
\end{tabular}%
}
\end{table*}

\subsection{Complete Multi-Network Benchmark Results (\texttt{GIN-2g}, \texttt{GIN-3g}, \texttt{DSL}, \texttt{ADSL2+})}
Table~\ref{tab:full_network_results} reports the complete evaluation metrics across \texttt{GIN-2g} ($\tau_{\text{RTT}}=1200$\,ms, $\beta_{\text{BW}}=12.5$\,B/ms), \texttt{GIN-3g} ($\tau_{\text{RTT}}=400$\,ms, $\beta_{\text{BW}}=50$\,B/ms), \texttt{residential-DSL} ($\tau_{\text{RTT}}=50$\,ms, $\beta_{\text{BW}}=250$\,B/ms), and \texttt{residential-ADSL2+} ($\tau_{\text{RTT}}=50$\,ms, $\beta_{\text{BW}}=2500$\,B/ms) for all five Google Noto fonts. Table~\ref{tab:woff2_byte_comparison} compares the canonical $200$\,B/glyph network model against exact physical WOFF2 slice byte sizes $B_{\text{WOFF2}}(S_k)$ (which average $193.5$\,B/glyph on LGC and $242.9$--$303.2$\,B/glyph on CJK due to replicated OpenType layout tables across slices), Table~\ref{tab:multi_seed_ci} reports multi-seed stability across random seeds $\{42, 43, 44\}$, and Figure~\ref{fig:website_simulation} and Table~\ref{tab:website_simulation} report the real-website simulation and Headless Chrome (\texttt{v155.0.8059.79}) benchmark across all five Noto fonts.

\makeatletter
\setlength{\@fptop}{0pt}
\setlength{\@fpsep}{14pt}
\setlength{\@fpbot}{0pt plus 1fil}
\setlength{\@dblfptop}{0pt}
\setlength{\@dblfpsep}{14pt}
\setlength{\@dblfpbot}{0pt plus 1fil}
\makeatother

\begin{table*}[t]
\centering
\caption{Complete cross-network latency benchmark results (Mean and P95 per-glyph latency in ms, $L=10$ pages/session) across \textbf{\texttt{GIN-2g}}, \textbf{\texttt{GIN-3g}}, \textbf{\texttt{residential-DSL}}, and \textbf{\texttt{residential-ADSL2+}} for all five Google Noto fonts.}
\label{tab:full_network_results}
\resizebox{0.99\textwidth}{!}{%
\begin{tabular}{llrrrrrrrr}
\toprule
\textbf{Font Family} & \textbf{Strategy} & \textbf{CSS Ranges} & \textbf{2G Mean} & \textbf{2G P95} & \textbf{3G Mean} & \textbf{3G P95} & \textbf{DSL Mean} & \textbf{DSL P95} & \textbf{ADSL2+ Mean} \\
\midrule
\texttt{Noto Sans LGC} & \texttt{single\_slice} & 123 & 60.26 & 566.45 & 15.22 & 143.10 & 3.00 & 28.17 & \textbf{0.37} \\
($2,297$ cps) & \texttt{equal\_bin\_20} & 140 & 33.37 & 153.43 & 9.44 & 43.41 & 1.56 & 7.17 & 0.65 \\
 & \texttt{random\_slice\_20} & 2,180 & 95.82 & 906.75 & 27.11 & 256.54 & 4.48 & 42.35 & 1.87 \\
 & \texttt{language\_slice} ($300$ unass.) & 92 & 40.17 & 258.54 & 10.62 & 66.28 & 1.95 & 12.76 & 0.45 \\
 & \textbf{MLFont Online LDA (20)} & 537 & 25.15 & 144.76 & 7.11 & 40.95 & 1.17 & 6.76 & 0.49 \\
 & \textbf{MLFont DNNTM (20)} & 549 & \textbf{23.29} & \textbf{112.70} & \textbf{6.59} & \textbf{31.88} & \textbf{1.09} & \textbf{5.26} & 0.45 \\
 & \textbf{MLFont Range-Compact GA (20)} & 384 & 25.05 & 142.01 & 7.06 & 40.06 & 1.17 & 6.64 & 0.48 \\
\midrule
\texttt{Noto Sans KR} & \texttt{single\_slice} & 4,598 & 250.67 & 1739.45 & 62.73 & 435.34 & 12.53 & 86.92 & 1.28 \\
($22,853$ cps) & \texttt{equal\_bin\_52} & 4,635 & 259.74 & 1338.41 & 68.09 & 350.87 & 12.67 & 65.29 & 2.69 \\
 & \texttt{random\_slice\_52} & 22,493 & 270.86 & 2029.63 & 71.01 & 532.06 & 13.21 & 99.02 & 2.80 \\
 & \texttt{language\_slice} ($4,322$ unass.) & 4,579 & 203.56 & 1514.81 & 51.20 & 380.39 & 10.15 & 75.57 & 1.16 \\
 & \texttt{most\_300\_300\_equal\_bin\_50} & 5,608 & 197.85 & 715.15 & 51.88 & 187.72 & 9.65 & 34.86 & 2.05 \\
 & \textbf{MLFont Online LDA (52)} & 8,327 & \textbf{66.59} & 348.80 & \textbf{17.48} & 91.73 & \textbf{3.25} & 16.99 & \textbf{0.70} \\
 & \textbf{MLFont DNNTM (52)} & 8,354 & 66.97 & \textbf{331.00} & 17.59 & \textbf{87.21} & 3.26 & \textbf{16.10} & 0.71 \\
 & \textbf{MLFont Range-Compact GA (52)} & 6,769 & 80.22 & 354.26 & 21.06 & 93.00 & 3.91 & 17.27 & 0.84 \\
\midrule
\texttt{Noto Sans JP} & \texttt{single\_slice} & 4,830 & 94.75 & 701.82 & 23.72 & 175.72 & 4.73 & 35.06 & 0.49 \\
($16,663$ cps) & \texttt{equal\_bin\_62} & 4,876 & 95.38 & 698.56 & 25.58 & 187.34 & 4.60 & 33.66 & 1.24 \\
 & \texttt{random\_slice\_62} & 16,450 & 114.88 & 893.64 & 30.81 & 239.66 & 5.53 & 43.06 & 1.49 \\
 & \texttt{language\_slice} ($7,632$ unass.) & 5,617 & 45.20 & 305.76 & 11.42 & 77.00 & 2.25 & 15.22 & 0.28 \\
 & \texttt{most\_500\_500\_equal\_bin\_60} & 5,975 & 76.50 & 387.35 & 20.50 & 103.64 & 3.69 & 18.69 & 0.99 \\
 & \textbf{MLFont Online LDA (62)} & 8,888 & 34.87 & 201.00 & 9.31 & 53.61 & \textbf{1.68} & 9.71 & \textbf{0.44} \\
 & \textbf{MLFont DNNTM (62)} & 8,710 & \textbf{34.80} & \textbf{188.72} & \textbf{9.30} & \textbf{50.47} & \textbf{1.68} & \textbf{9.12} & \textbf{0.44} \\
 & \textbf{MLFont Range-Compact GA (62)} & 7,298 & 42.53 & 223.47 & 11.37 & 59.70 & 2.05 & 10.79 & 0.54 \\
\midrule
\texttt{Noto Sans TC} & \texttt{single\_slice} & 5,036 & 125.58 & 704.12 & 31.43 & 176.24 & 6.28 & 35.18 & \textbf{0.64} \\
($20,637$ cps) & \texttt{equal\_bin\_82} & 5,098 & 115.29 & 689.78 & 31.03 & 185.64 & 5.54 & 33.17 & 1.55 \\
 & \texttt{random\_slice\_82} & 20,436 & 144.57 & 922.42 & 38.91 & 248.25 & 6.95 & 44.36 & 1.94 \\
 & \texttt{language\_slice} ($11,699$ unass.) & 5,767 & 52.25 & 282.95 & 13.23 & 71.40 & 2.60 & 14.08 & 0.33 \\
 & \texttt{most\_600\_600\_equal\_bin\_80} & 6,672 & 92.66 & 446.08 & 24.90 & 119.91 & 4.46 & 21.47 & 1.23 \\
 & \textbf{MLFont Online LDA (82)} & 11,867 & \textbf{50.11} & 250.61 & \textbf{13.43} & 67.58 & \textbf{2.42} & 12.04 & 0.65 \\
 & \textbf{MLFont DNNTM (82)} & 11,410 & 51.65 & \textbf{226.65} & 13.84 & \textbf{60.72} & 2.49 & \textbf{10.93} & 0.67 \\
 & \textbf{MLFont Range-Compact GA (82)} & 8,743 & 56.19 & 265.70 & 15.07 & 71.62 & 2.71 & 12.79 & 0.73 \\
\midrule
\texttt{Noto Sans SC} & \texttt{single\_slice} & 418 & 253.67 & 1629.56 & 63.47 & 407.72 & 12.68 & 81.45 & 1.29 \\
($30,759$ cps) & \texttt{equal\_bin\_102} & 519 & 169.75 & 1165.36 & 45.26 & 310.69 & 8.21 & 56.33 & 2.09 \\
 & \texttt{random\_slice\_102} & 30,457 & 242.74 & 1930.38 & 64.71 & 514.64 & 11.73 & 93.32 & 2.99 \\
 & \texttt{language\_slice} ($19,659$ unass.) & 6,646 & 86.54 & 554.31 & 21.82 & 139.59 & 4.31 & 27.61 & 0.51 \\
 & \texttt{most\_1000\_500\_equal\_bin\_100} & 2,965 & 110.52 & 502.64 & 29.38 & 131.07 & 5.35 & 24.59 & 1.32 \\
 & \textbf{MLFont Online LDA (102)} & 10,177 & 60.16 & 314.64 & 15.93 & 83.23 & 2.92 & 15.27 & 0.69 \\
 & \textbf{MLFont DNNTM (102)} & 9,640 & \textbf{59.30} & \textbf{311.97} & \textbf{15.71} & \textbf{82.49} & \textbf{2.88} & \textbf{15.14} & \textbf{0.68} \\
 & \textbf{MLFont Range-Compact GA (102)} & 5,935 & 66.67 & 356.73 & 17.67 & 94.33 & 3.23 & 17.32 & 0.78 \\
\bottomrule
\end{tabular}%
}
\end{table*}

\begin{table*}[t]
\centering
\caption{Comparison of canonical $200$\,B/glyph simulation vs.\ exact physical WOFF2 slice byte sizes $B_{\text{WOFF2}}(S_k)$ for MLFont Range-Compact GA ($L=10$).}
\label{tab:woff2_byte_comparison}
\resizebox{0.82\textwidth}{!}{%
\begin{tabular}{lrrrrrr}
\toprule
\textbf{Font Family} & \textbf{Slices} & \textbf{Avg WOFF2 B/glyph} & \textbf{\texttt{GIN-3g} ($200$\,B)} & \textbf{\texttt{GIN-3g} (Physical WOFF2)} & \textbf{\texttt{DSL} ($200$\,B)} & \textbf{\texttt{DSL} (Physical WOFF2)} \\
\midrule
\texttt{Noto Sans LGC} & 20 & $193.5$\,B & 7.06 / 40.06 & 6.71 / 39.49 & 1.17 / 6.64 & 1.10 / 6.47 \\
\texttt{Noto Sans KR} & 52 & $242.9$\,B & 21.06 / 93.00 & 25.52 / 116.06 & 3.91 / 17.27 & 4.80 / 22.05 \\
\texttt{Noto Sans JP} & 62 & $293.7$\,B & 11.37 / 59.70 & 16.29 / 87.35 & 2.05 / 10.79 & 3.04 / 16.37 \\
\texttt{Noto Sans TC} & 82 & $302.0$\,B & 15.07 / 71.62 & 20.75 / 99.90 & 2.71 / 12.79 & 3.85 / 18.42 \\
\texttt{Noto Sans SC} & 102 & $303.2$\,B & 17.67 / 94.33 & 23.96 / 126.04 & 3.23 / 17.32 & 4.49 / 23.67 \\
\bottomrule
\end{tabular}%
}

\vspace{1.2em}
\caption{Multi-seed reproducibility across 3 independent random seeds ($\{42, 43, 44\}$): mean $\pm$ standard deviation of \texttt{GIN-3g} per-glyph latency (ms) and total CSS \texttt{unicode-range} interval count.}
\label{tab:multi_seed_ci}
\resizebox{0.76\textwidth}{!}{%
\begin{tabular}{llrr}
\toprule
\textbf{Font Family} & \textbf{MLFont Learner} & \textbf{\texttt{GIN-3g} Mean $\pm$ Std (ms)} & \textbf{CSS Ranges $\pm$ Std} \\
\midrule
\texttt{Noto Sans LGC} ($2{,}297$ cps) & Online LDA ($K=20$) & $7.11 \pm 0.03$ & $528.0 \pm 17.9$ \\
 & DNNTM ($K=20$) & $6.66 \pm 0.20$ & $539.0 \pm 7.3$ \\
 & Range-Compact GA ($K=20$) & $7.19 \pm 0.10$ & $339.0 \pm 42.0$ \\
\midrule
\texttt{Noto Sans SC} ($30{,}759$ cps) & Online LDA ($102$ slices) & $15.83 \pm 0.08$ & $10{,}162.7 \pm 33.8$ \\
 & DNNTM ($102$ slices) & $15.83 \pm 0.09$ & $9{,}698.0 \pm 44.5$ \\
 & Range-Compact GA ($102$ slices) & $17.70 \pm 0.11$ & $5{,}931.7 \pm 3.4$ \\
\bottomrule
\end{tabular}%
}

\vspace{1.2em}
\caption*{\textbf{Algorithm 1:} Memetic Range-Compact Genetic Algorithm (\texttt{RangeCompactGeneticLearner})}
\small
\begin{tabular}{p{0.96\textwidth}}
\toprule
\textbf{Require:} Sorted tail codepoints $\mathcal{C}_{\text{rem}} = (c_1, \dots, c_{V_{\text{rem}}})$, normalized topic matrix $\mathbf{\Phi} \in \mathbb{R}_+^{K \times V_{\text{rem}}}$, survivors $n_s=20$, mutations $M_{\text{mut}}=40$, crossovers $M_{\text{cross}}=20$, iterations $T=1200$, decay bounds $(\Delta_{\min}, \Delta_{\max})$, power $p=3.0$, weight $w_r \in \{0.012, 0.18\}$. \\
\textbf{Ensure:} Best-found range-compact slice assignment $\mathbf{s}^* \in \{0, \dots, K-1\}^{V_{\text{rem}}}$ (a heuristic, not a certified global optimum). \\
\midrule
1: Identify foreground topic tier $\mathcal{T}_{\text{fg}}$ (the $K_{\text{eff}}$ most populated greedy topics when $V_{\text{rem}} > 5{,}000$, and all $K$ topics on dense LGC fonts) vs.\ background archive tier $\mathcal{T}_{\text{bg}} = \{0, \dots, K-1\} \setminus \mathcal{T}_{\text{fg}}$; initialize seeds $\mathbf{s}^{(1)}$ (\texttt{balance\_topics\_diff\_constrained}), $\mathbf{s}^{(2)}$ (\texttt{balance\_topics\_greedy}), and $\mathbf{s}^{(3)}$ (\texttt{init\_gene\_by\_equal\_binning}), repaired via tier-preserving \texttt{\_repair\_max\_diff} to satisfy $\max_k |S_k| - \min_k |S_k| \le \Delta_{\max}$. \\
2: \textbf{for} generation $t = 0$ \textbf{to} $T$ \textbf{do} \\
3: \quad $\text{max\_diff}(t) \leftarrow \lfloor \Delta_{\min} + (\Delta_{\max} - \Delta_{\min}) \max(0, 1 - t/T)^p \rfloor$. \\
4: \quad Sample $M_{\text{mut}}$ offspring via the 7 mutation operators (\texttt{MERGE\_SHORT\_RANGE}, \texttt{LEFT/RIGHT\_SHIFT}, \texttt{PURELY\_RANDOM}, \texttt{SWAP\_TOPIC}, \texttt{INCREASE\_MIN/DECREASE\_MAX\_TOPIC}) preserving $\mathcal{T}_{\text{fg}}/\mathcal{T}_{\text{bg}}$ tier membership, and repair capacity violations via boundary-first \texttt{\_repair\_max\_diff}$(\cdot, \text{max\_diff}(t))$. \\
5: \quad Sample $M_{\text{cross}}$ offspring via single-point crossover and retain valid offspring satisfying $\text{max\_diff}(t)$. \\
6: \quad At curriculum milestones $t/T \in \{0.15, 0.35, 0.55, 0.75, 0.90\}$, inject memetic candidates via short-run smoothing (\texttt{\_smooth\_beneficial\_runs}) and Lagrangian 1D Potts-Viterbi DP (\texttt{potts\_viterbi\_smooth\_topics}); at $t/T \in \{0.55, 0.75, 0.90\}$ (when $V_{\text{rem}} \ge 2K$), also inject sparse-tail island bridging (\texttt{build\_island\_bridged\_topics}). \\
7: \quad Deduplicate by topic-array hash, evaluate $\mathcal{J}(\mathbf{s}) = w_r C_{\text{range}}(\mathbf{s}) + (1 - w_r) C_{\text{topic}}(\mathbf{s})$, and keep top $n_s$ elites. \\
8: \textbf{end for} \\
9: \textbf{return} Best elite individual $\mathbf{s}^* = \arg\min_{\mathbf{s} \in \mathcal{P}_T} \mathcal{J}(\mathbf{s})$. \\
\bottomrule
\end{tabular}
\end{table*}

\begin{table*}[t]
\centering
\includegraphics[width=0.96\textwidth]{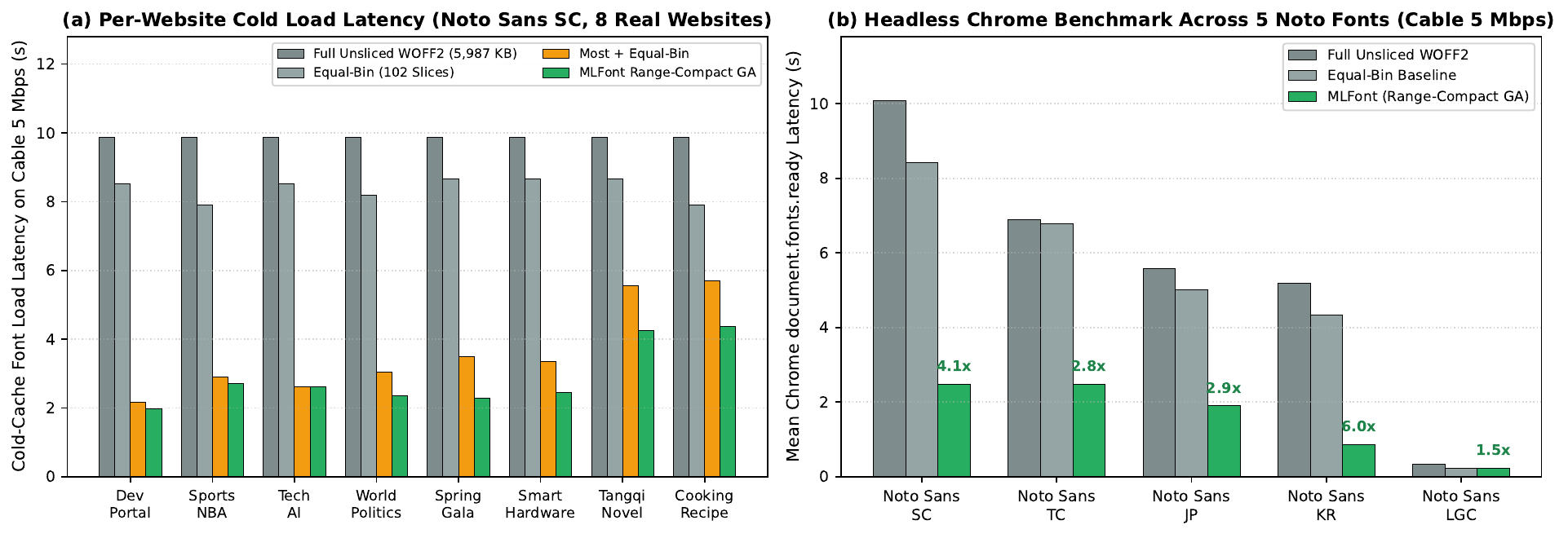}
\captionof{figure}{Real-website font loading simulation and Headless Chrome (\texttt{v155.0.8059.79}) verification across all five Google Noto fonts (\texttt{experiments/website\_simulation\_report.json}): \textbf{(a)} Per-website cold-cache font load latency on \texttt{Noto Sans SC} across $8$ Simplified Chinese websites on \texttt{Cable-5Mbps} ($5$\,Mbps, $28$\,ms RTT) and \textbf{(b)} Real Headless Chrome end-to-end \texttt{document.fonts.ready} latency across all $5$ Noto fonts (mean over all $3$ Chrome-benchmarked pages per font, with $45/45$ exact slice-set parity) showing $1.5\times$ to $6.0\times$ speedup over \texttt{single\_slice} (ratio of means: SC $4.06\times$, TC $2.77\times$, JP $2.94\times$, KR $6.02\times$, LGC $1.54\times$; mean per-page speedup: SC $4.14\times$, TC $2.86\times$, JP $3.08\times$, KR $6.36\times$, LGC $1.66\times$).}
\label{fig:website_simulation}

\vspace{1.2em}
\captionof{table}{Real-Website Font Loading Simulation ($20$ websites across $5$ Google Noto fonts) and Real Headless Chrome (\texttt{v155.0.8059.79}, \texttt{document.fonts} API) Verification on \texttt{GIN-3g} ($400$\,kbps, $400$\,ms RTT) and \texttt{Cable-5Mbps} ($5$\,Mbps, $28$\,ms RTT). Simulator columns are means over the listed pages; \emph{Chrome Dec.} is measured on the first benchmark page of each font. Chrome matched the simulator's exact slice set in all $45$ of $45$ cases ($100.0\%$).}
\label{tab:website_simulation}
{\setlength{\tabcolsep}{4pt}%
\resizebox{0.99\textwidth}{!}{%
\begin{tabular}{llrrrrrrrrr}
\toprule
\textbf{Font (Full WOFF2)} & \textbf{Strategy} & \textbf{Slices} & \textbf{Transfer} & \textbf{Saved} & \textbf{3G Cold} & \textbf{Cable Cold} & \textbf{1st Paint} & \textbf{Warm} & \textbf{Chrome Dec.} & \textbf{Speedup (3G/Cab)} \\
\midrule
\texttt{Noto Sans SC} & \texttt{single\_slice} (Full) & $1.00 / 1$ & $5,988.7$\,KB & $0.0\%$ & $123,450$\,ms & $9,869$\,ms & $9,869$\,ms & $0$\,ms & $230.6$\,ms ($1.00\times$) & $1.00\times$ / $1.00\times$ \\
($5,987.0$\,KB, $8$ p.) & \texttt{equal\_bin\_102} & $55.12 / 102$ & $4,924.0$\,KB & $17.8\%$ & $105,099$\,ms & $8,371$\,ms & $946$\,ms & $295$\,ms & $118.6$\,ms ($1.94\times$) & $1.17\times$ / $1.18\times$ \\
 & \texttt{most\_1000\_500\_eq\_100} & $21.75 / 102$ & $2,111.4$\,KB & $64.7\%$ & $45,201$\,ms & $3,606$\,ms & $721$\,ms & $1,066$\,ms & --- & $2.73\times$ / $2.74\times$ \\
 & \textbf{MLFont Range-Compact ($102$)} & $\mathbf{16.88 / 102}$ & $\mathbf{1,673.7\,\text{KB}}$ & $\mathbf{72.0\%}$ & $\mathbf{36,044\,\text{ms}}$ & $\mathbf{2,880\,\text{ms}}$ & $\mathbf{748\,\text{ms}}$ & $\mathbf{494\,\text{ms}}$ & $\mathbf{55.6\,\text{ms}}$ ($\mathbf{4.15\times}$) & $\mathbf{3.43\times}$ / $\mathbf{3.43\times}$ \\
\midrule
\texttt{Noto Sans TC} & \texttt{single\_slice} (Full) & $1.00 / 1$ & $4,067.0$\,KB & $0.0\%$ & $84,100$\,ms & $6,728$\,ms & $6,728$\,ms & $0$\,ms & $170.5$\,ms ($1.00\times$) & $1.00\times$ / $1.00\times$ \\
($4,047.4$\,KB, $3$ p.) & \texttt{equal\_bin\_82} & $52.00 / 82$ & $3,881.7$\,KB & $4.6\%$ & $83,644$\,ms & $6,662$\,ms & $834$\,ms & $589$\,ms & $182.8$\,ms ($0.93\times$) & $1.01\times$ / $1.01\times$ \\
 & \textbf{MLFont Range-Compact ($82$)} & $\mathbf{16.00 / 82}$ & $\mathbf{1,406.8\,\text{KB}}$ & $\mathbf{65.4\%}$ & $\mathbf{30,431\,\text{ms}}$ & $\mathbf{2,436\,\text{ms}}$ & $\mathbf{665\,\text{ms}}$ & $\mathbf{737\,\text{ms}}$ & $\mathbf{63.8\,\text{ms}}$ ($\mathbf{2.67\times}$) & $\mathbf{2.76\times}$ / $\mathbf{2.76\times}$ \\
\midrule
\texttt{Noto Sans JP} & \texttt{single\_slice} (Full) & $1.00 / 1$ & $3,279.1$\,KB & $0.0\%$ & $67,965$\,ms & $5,437$\,ms & $5,437$\,ms & $0$\,ms & $151.6$\,ms ($1.00\times$) & $1.00\times$ / $1.00\times$ \\
($3,260.2$\,KB, $3$ p.) & \texttt{equal\_bin\_62} & $37.00 / 62$ & $2,873.3$\,KB & $12.4\%$ & $61,789$\,ms & $4,924$\,ms & $859$\,ms & $989$\,ms & $92.8$\,ms ($1.63\times$) & $1.10\times$ / $1.10\times$ \\
 & \textbf{MLFont Range-Compact ($62$)} & $\mathbf{10.33 / 62}$ & $\mathbf{1,062.9\,\text{KB}}$ & $\mathbf{67.6\%}$ & $\mathbf{23,117\,\text{ms}}$ & $\mathbf{1,851\,\text{ms}}$ & $\mathbf{689\,\text{ms}}$ & $\mathbf{568\,\text{ms}}$ & $\mathbf{60.0\,\text{ms}}$ ($\mathbf{2.53\times}$) & $\mathbf{2.94\times}$ / $\mathbf{2.94\times}$ \\
\midrule
\texttt{Noto Sans KR} & \texttt{single\_slice} (Full) & $1.00 / 1$ & $3,048.7$\,KB & $0.0\%$ & $63,246$\,ms & $5,059$\,ms & $5,059$\,ms & $0$\,ms & $147.4$\,ms ($1.00\times$) & $1.00\times$ / $1.00\times$ \\
($3,032.4$\,KB, $3$ p.) & \texttt{equal\_bin\_52} & $24.00 / 52$ & $2,516.7$\,KB & $17.4\%$ & $53,552$\,ms & $4,274$\,ms & $1,120$\,ms & $99$\,ms & $74.4$\,ms ($1.98\times$) & $1.18\times$ / $1.18\times$ \\
 & \textbf{MLFont Range-Compact ($52$)} & $\mathbf{5.67 / 52}$ & $\mathbf{461.2\,\text{KB}}$ & $\mathbf{84.9\%}$ & $\mathbf{10,394\,\text{ms}}$ & $\mathbf{836\,\text{ms}}$ & $\mathbf{287\,\text{ms}}$ & $\mathbf{97\,\text{ms}}$ & $\mathbf{43.5\,\text{ms}}$ ($\mathbf{3.39\times}$) & $\mathbf{6.09\times}$ / $\mathbf{6.06\times}$ \\
\midrule
\texttt{Noto Sans LGC} & \texttt{single\_slice} (Full) & $1.00 / 1$ & $158.4$\,KB & $0.0\%$ & $4,043$\,ms & $316$\,ms & $316$\,ms & $0$\,ms & $28.6$\,ms ($1.00\times$) & $1.00\times$ / $1.00\times$ \\
($157.7$\,KB, $3$ p.) & \texttt{equal\_bin\_20} & $4.00 / 20$ & $87.8$\,KB & $44.6\%$ & $2,599$\,ms & $201$\,ms & $201$\,ms & $32$\,ms & $27.2$\,ms ($1.05\times$) & $1.56\times$ / $1.57\times$ \\
 & \textbf{MLFont Range-Compact ($20$)} & $\mathbf{4.00 / 20}$ & $\mathbf{87.6\,\text{KB}}$ & $\mathbf{44.7\%}$ & $\mathbf{2,596\,\text{ms}}$ & $\mathbf{201\,\text{ms}}$ & $\mathbf{201\,\text{ms}}$ & $\mathbf{0\,\text{ms}}$ & $\mathbf{24.5\,\text{ms}}$ ($\mathbf{1.17\times}$) & $\mathbf{1.56\times}$ / $\mathbf{1.57\times}$ \\
\bottomrule
\end{tabular}%
}}
\end{table*}

\section{Mathematical Proofs and Algorithmic Details}
\label{app:math}

Algorithm~1 summarizes the complete memetic evolutionary loop of \texttt{RangeCompactGeneticLearner}, Proposition~1 establishes the monotonic interval-reduction property motivating its neighborhood-smoothing mutation operators, and Proposition~2 establishes the exact $\mathcal{O}(V_{\text{rem}} K)$ dynamic programming complexity of the Lagrangian 1D Potts-Viterbi relaxation and the strict interval reduction of sparse-tail island bridging.

\begin{proposition}[Interval Monotonicity of Neighborhood Smoothing]
Let $\mathbf{s} = (s_1, \dots, s_{V_{\text{rem}}}) \in \{0, \dots, K-1\}^{V_{\text{rem}}}$ be a slice assignment vector over sorted codepoints $c^{\text{rem}}_1 < \dots < c^{\text{rem}}_{V_{\text{rem}}}$. Suppose index $v \in \{2, \dots, V_{\text{rem}}-1\}$ satisfies $c^{\text{rem}}_v = c^{\text{rem}}_{v-1} + 1$ and $s_v \neq s_{v-1}$. Define the smoothed assignment $\mathbf{s}'$ identical to $\mathbf{s}$ except $s'_v = s_{v-1}$. Then both the per-slice CSS range sum $R(\mathbf{s}) = \sum_{k=0}^{K-1} R(S_k)$ and the global break count $C_{\text{range}}(\mathbf{s})$ satisfy:
\begin{equation}
    R(\mathbf{s}') - R(\mathbf{s}) \le 0, \qquad C_{\text{range}}(\mathbf{s}') - C_{\text{range}}(\mathbf{s}) \le 0,
\end{equation}
and $C_{\text{range}}(\mathbf{s}') - C_{\text{range}}(\mathbf{s}) \in \{0, -1, -2\}$: it equals $-2$ when $c^{\text{rem}}_{v+1} = c^{\text{rem}}_v + 1$ and $s_{v+1} = s_{v-1}$, $0$ when $c^{\text{rem}}_{v+1} = c^{\text{rem}}_v + 1$ and $s_{v+1} = s_v$ (the removed break reappears between $v$ and $v+1$), and $-1$ otherwise.
\end{proposition}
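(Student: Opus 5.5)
The plan is to treat both quantities as sums of local contributions and observe that relabelling the single index $v$ only changes contributions touching $v$. Write $a = s_v$ and $b = s_{v-1}$, with $a \neq b$ by hypothesis. Then $\mathbf{s}'$ differs from $\mathbf{s}$ only in that the codepoint $c^{\text{rem}}_v$ moves from $S_a$ to $S_b$. Everything else is a finite case check on the neighbours $c^{\text{rem}}_{v-1}$ and $c^{\text{rem}}_{v+1}$.

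\textbf{The global break count.} By Eq.~\eqref{eq:c_range}, $C_{\text{range}}$ is $1$ plus a sum of pair indicators over consecutive indices $(u,u+1)$. Only the pairs $(v-1,v)$ and $(v,v+1)$ involve $s_v$, and these exist because $2 \le v \le V_{\text{rem}}-1$.
\begin{itemize}
\item For the pair $(v-1,v)$: the codepoints are adjacent and $s_v \neq s_{v-1}$, so its indicator drops from $1$ to $0$, a change of $-1$.
\item For the pair $(v,v+1)$ when $c^{\text{rem}}_{v+1} \neq c^{\text{rem}}_v + 1$: the indicator is $1$ both before and after, so the total change is $-1$.
\item For the pair $(v,v+1)$ when $c^{\text{rem}}_{v+1} = c^{\text{rem}}_v + 1$: the indicator changes from $\mathbb{I}[s_{v+1} \neq a]$ to $\mathbb{I}[s_{v+1} \neq b]$. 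This gives three subcases.
  \begin{itemize}
  \item If $s_{v+1} = b$, the change is $1 \to 0$, so the total is $-2$.
  \item If $s_{v+1} = a$, the change is $0 \to 1$, so the total is $0$.
  \item If $s_{v+1} \notin \{a,b\}$, the indicator stays at $1$, so the total is $-1$.
  \end{itemize}
\end{itemize}
This reproduces exactly the trichotomy in the statement and shows $C_{\text{range}}(\mathbf{s}') - C_{\text{range}}(\mathbf{s}) \le 0$.

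\textbf{The per-slice range sum.} The subtlety here is that $R(S_k)$ from Section~2.1 measures contiguity in the integers $\mathbb{N}$, not along the index order of $\mathcal{C}_{\text{rem}}$. Codepoints in Tier-1 slices or outside the font therefore act as gaps. To handle this uniformly, I will first rewrite Eq.~(2) in the equivalent left-endpoint form
\[
R(S) = \bigl|\{x \in S : x - 1 \notin S\}\bigr|,
\]
since each maximal interval has exactly one left endpoint. Only $R(S_a)$ and $R(S_b)$ change, and only through left-endpoint membership of $x = c := c^{\text{rem}}_v$ and $x = c+1$.

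\emph{Adding $c$ to $S_b$.} Since $c - 1 = c^{\text{rem}}_{v-1} \in S_b$, the point $c$ is not a new left endpoint. If $c+1 \in S_b$, then $c+1$ stops being a left endpoint. Hence $\Delta R(S_b) \in \{0,-1\}$.

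\emph{Removing $c$ from $S_a$.} Since $c-1 \in S_b$, we have $c-1 \notin S_a$, so $c$ was a left endpoint of $S_a$ and is lost. If $c+1 \in S_a$, then $c+1$ becomes a new left endpoint. Hence $\Delta R(S_a) \in \{0,-1\}$.

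Summing the two gives $R(\mathbf{s}') - R(\mathbf{s}) \in \{0,-1,-2\}$, and in particular $R(\mathbf{s}') - R(\mathbf{s}) \le 0$.

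\textbf{Main obstacle.} The step needing the most care is the $R$ computation. A naive argument would reuse the index-based reasoning from $C_{\text{range}}$, which is invalid when $c^{\text{rem}}_{v+1} \neq c + 1$ but $c+1$ still lies in $\mathbb{N}$ in some other slice or outside $\mathcal{C}_{\text{rem}}$. The left-endpoint reformulation avoids this, because it only asks whether $c \pm 1$ belong to $S_a$ or $S_b$, whatever else happens there. The key hypothesis used is $c^{\text{rem}}_{v-1} = c - 1$, which makes $c$ attach to $S_b$ for free and forces $c$ to be an endpoint in $S_a$.
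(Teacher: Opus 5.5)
Your proof is correct and follows the paper's argument essentially step for step. For $C_{\text{range}}$ you use the same case analysis on the pair indicators $(v-1,v)$ and $(v,v+1)$, and for $R$ you use the same bookkeeping of whether $c$ and $c+1$ are left endpoints in the two affected slices.

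The ``main obstacle'' you flag is milder than you suggest. Topic slices lie inside $\mathcal{C}_{\text{rem}}$, so $c+1 \in S_k$ forces $c^{\text{rem}}_{v+1} = c+1$. Applying your left-endpoint count to every codepoint in fact shows $\sum_k R(S_k) = C_{\text{range}}(\mathbf{s})$ identically, so the index-based reasoning would also have been valid.
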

\begin{proof}
Consider the two affected slices $k_1 = s_{v-1}$ and $k_2 = s_v$. Because $c^{\text{rem}}_{v-1} = c^{\text{rem}}_v - 1 \in S_{k_1}$ and $S_{k_1} \cap S_{k_2} = \emptyset$, we have $c^{\text{rem}}_v - 1 \notin S_{k_2}$. Thus $c^{\text{rem}}_v$ is the left endpoint of an interval in $S_{k_2}$, so removing $c^{\text{rem}}_v$ from $S_{k_2}$ yields $\Delta R(S_{k_2}) = 0$ if $c^{\text{rem}}_v + 1 \in S_{k_2}$ and $\Delta R(S_{k_2}) = -1$ otherwise. Inserting $c^{\text{rem}}_v$ into $S_{k_1}$ extends the interval ending at $c^{\text{rem}}_{v-1} = c^{\text{rem}}_v - 1$, yielding $\Delta R(S_{k_1}) \le 0$ (and $\Delta R(S_{k_1}) = -1$ if $c^{\text{rem}}_v + 1 \in S_{k_1}$). For $C_{\text{range}}(\mathbf{s})$, setting $s'_v = s_{v-1}$ eliminates the break between $v-1$ and $v$ ($-1$); the indicator between $v$ and $v+1$ changes from $\mathbb{I}[\text{gap} \lor s_{v+1} \neq s_v]$ to $\mathbb{I}[\text{gap} \lor s_{v+1} \neq s_{v-1}]$, which is $+1$ only in the contiguous case $s_{v+1} = s_v$, $-1$ only in the contiguous case $s_{v+1} = s_{v-1}$, and $0$ otherwise. Hence the net change is never positive; the smoothing mutation operators of Section~\ref{sec:ga} apply it only where it strictly reduces $\mathcal{J}$.
\end{proof}

\begin{proposition}[Linear-in-$K$ Potts-Viterbi Optimality and Island Bridging]
For any dual slice-capacity multiplier vector $\boldsymbol{\mu} \in \mathbb{R}^K$, minimizing the Lagrangian relaxation $\mathcal{L}(\mathbf{s}, \boldsymbol{\mu}) = w_r C_{\text{range}}(\mathbf{s}) + \sum_{v=1}^{V_{\text{rem}}} (\mu_{s_v} - (1-w_r)\Phi_{s_v,v})$ over $\mathbf{s} \in \{0, \dots, K-1\}^{V_{\text{rem}}}$ decomposes into independent 1D Potts chains along maximal contiguous Unicode segments and is solved globally in $\mathcal{O}(V_{\text{rem}} K)$ time. Moreover, absorbing any contiguous background island $[l, r]$ enclosed by identical foreground assignments $s_{l-1} = s_{r+1} = k_{\text{fg}}$ within a contiguous Unicode segment strictly reduces $C_{\text{range}}(\mathbf{s})$ by at least $2$.
\end{proposition}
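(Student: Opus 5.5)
The proof has two parts: the Viterbi claim and the island-bridging claim. For the first, rewrite $\mathcal{L}(\mathbf{s},\boldsymbol{\mu})$ as a sum of unary and pairwise terms. Set $u_v(k) = \mu_k - (1-w_r)\Phi_{k,v}$. By Eq.~\eqref{eq:c_range}, $w_r C_{\text{range}}(\mathbf{s}) = w_r + w_r\sum_{v} \mathbb{I}[\text{gap}_v \lor s_{v+1}\neq s_v]$, where $\text{gap}_v$ means $c^{\text{rem}}_{v+1}\neq c^{\text{rem}}_v+1$.

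At a gap the indicator equals $1$ whatever the labels are. It is therefore a constant, and it decouples the chain. Cutting $\mathcal{C}_{\text{rem}}$ at every gap gives maximal contiguous segments $G_1,\dots,G_J$. On these segments $\mathcal{L}$ equals a constant plus $\sum_j \mathcal{L}_j(\mathbf{s}|_{G_j})$, where each $\mathcal{L}_j$ is a 1D Potts energy: the sum of $u_v(s_v)$ plus $w_r\mathbb{I}[s_{v+1}\neq s_v]$ on consecutive pairs. Since the $\mathcal{L}_j$ involve disjoint variables, a global minimizer is obtained by concatenating segment minimizers.

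On each segment, run the min-sum recursion $D_{v,k} = u_v(k) + \min_{k'}\{D_{v-1,k'} + w_r\mathbb{I}[k'\neq k]\}$, with $D_{\text{first},k}=u_{\text{first}}(k)$. I will justify its correctness by the standard induction: $D_{v,k}$ is the minimum energy of the prefix ending at $v$ with label $k$. The minimizer is then recovered by backtracking through stored argmins.

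The complexity step uses the identity stated in Section~\ref{sec:ga}: $\min_{k'}\{D_{v-1,k'}+w_r\mathbb{I}[k'\neq k]\} = \min(D_{v-1,k},\, m_{v-1}+w_r)$, where $m_{v-1}=\min_{k'}D_{v-1,k'}$. This holds because $k'=k$ costs $D_{v-1,k}$, and every $k'\neq k$ costs at least $m_{v-1}+w_r$. When the overall minimizer is $k$ itself, the second term is never smaller than the first, so including it is harmless. With this identity, $m_{v-1}$ is computed once in $\mathcal{O}(K)$ and each of the $K$ updates costs $\mathcal{O}(1)$. Each position therefore takes $\mathcal{O}(K)$, and summing over all segments gives $\mathcal{O}(V_{\text{rem}}K)$ in total, backtracking included.

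For island bridging, let $[l,r]$ be a maximal background run lying inside one contiguous segment, with $s_{l-1}=s_{r+1}=k_{\text{fg}}\neq s_i$ for $i\in[l,r]$. Write $\mathbf{s}'$ for the assignment after absorption. The only indicators that change are those on the pairs $(i,i+1)$ with $l-1\le i\le r$, and all of these pairs are gap-free.

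Before absorption, both boundary pairs $(l-1,l)$ and $(r,r+1)$ are breaks, since the labels differ across each. Interior pairs may also be breaks if the island contains several background labels. After absorption, every label on $[l-1,r+1]$ equals $k_{\text{fg}}$, so all these indicators are $0$. The change is therefore $-(2+\#\text{interior breaks})\le -2$, which gives the claimed reduction of at least $2$. Alternatively, the same bound follows by applying Proposition~1 repeatedly from left to right, with the last step falling in its $-2$ case.

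The main subtlety I expect is being precise that "background island" means every label in $[l,r]$ differs from $k_{\text{fg}}$, so that both boundary breaks really exist. I also need to state explicitly that the Lagrangian drops the cardinality constraints, so "global" optimality refers to $\mathcal{L}(\cdot,\boldsymbol{\mu})$ and not to the constrained objective.
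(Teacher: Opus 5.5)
Your proposal is correct and takes essentially the same route as the paper's proof. Both arguments decouple the chain at non-contiguous gaps and run the min-sum Viterbi recursion with the two-valued Potts identity $\min(D_{v-1,k},\, m_{v-1}+w_r)$ to get $\mathcal{O}(V_{\text{rem}}K)$, then count the two boundary breaks plus any interior breaks for island bridging. Your explicit check of that identity (which uses $w_r \ge 0$) and your remark that ``global'' optimality refers to the relaxed objective $\mathcal{L}(\cdot,\boldsymbol{\mu})$ rather than the capacity-constrained problem are useful clarifications the paper leaves implicit.
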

\begin{proof}
At any non-contiguous boundary $c^{\text{rem}}_{v+1} > c^{\text{rem}}_v + 1$, the indicator $\mathbb{I}[c^{\text{rem}}_{v+1} \neq c^{\text{rem}}_v + 1 \lor s_{v+1} \neq s_v] = 1$ is independent of $(s_v, s_{v+1})$, decoupling the chain across non-contiguous gaps. Within any contiguous segment, let $D_{v,k}$ denote the minimum prefix cost ending in state $s_v = k$. Because the pairwise Potts penalty $w_r \mathbb{I}[k' \neq k]$ takes only two values ($0$ for $k'=k$ and $w_r$ for $k' \neq k$), the Bellman recursion simplifies to $D_{v,k} = \mu_k - (1-w_r)\Phi_{k,v} + \min(D_{v-1,k},\, m_{v-1} + w_r)$ where $m_{v-1} = \min_{k'} D_{v-1,k'}$ is computed once per step $v$ in $\mathcal{O}(K)$ time, yielding $\mathcal{O}(V_{\text{rem}} K)$ total complexity. Finally, if $[l, r]$ is a background run with $s_{l-1} = s_{r+1} = k_{\text{fg}} \notin \{s_l, \dots, s_r\}$ inside a contiguous Unicode block, setting $s'_u = k_{\text{fg}}$ for $u \in [l, r]$ eliminates the two boundary transitions at $(l-1, l)$ and $(r, r+1)$ and any internal transitions within $[l, r]$, giving $C_{\text{range}}(\mathbf{s}') - C_{\text{range}}(\mathbf{s}) \le -2$.
\end{proof}